\documentclass{article}

\usepackage{microtype}
\usepackage{graphicx}
\usepackage{subcaption}
\usepackage{booktabs} 
\usepackage{threeparttable}
\usepackage{enumitem}
\usepackage[table]{xcolor}
\usepackage{multirow}
\usepackage{tabularx}
\usepackage{array}
\usepackage{makecell}
\usepackage{xspace}

\newcolumntype{Y}{>{\centering\arraybackslash}X}

\newcommand{\bigO}{\ensuremath{\mathcal{O}}\xspace}
\usepackage[normalem]{ulem}  
\newcommand\scalemath[2]{\scalebox{#1}{\mbox{\ensuremath{\displaystyle #2}}}}

\newenvironment{shrinkeq}[1]
{\bgroup
\addtolength\abovedisplayshortskip{#1}
\addtolength\abovedisplayskip{#1}
\addtolength\belowdisplayshortskip{#1}
\addtolength\belowdisplayskip{#1}}
{\egroup\ignorespacesafterend}

\usepackage{hyperref}

\usepackage[accepted]{icml2026}

\usepackage{amsmath}
\usepackage{amssymb}
\usepackage{mathtools}
\usepackage{amsthm}

\DeclareMathOperator{\logaddexp}{logaddexp}

\DeclareMathOperator{\stopgrad}{stopgrad}
\DeclareMathOperator{\logsumexp}{logsumexp}

\usepackage[capitalize,noabbrev]{cleveref}

\theoremstyle{plain}
\newtheorem{theorem}{Theorem}[section]
\newtheorem{proposition}[theorem]{Proposition}

\theoremstyle{definition}

\theoremstyle{remark}

\definecolor{rank1}{cmyk}{0.35, 0.10, 0, 0} 
\definecolor{rank2}{cmyk}{0.15, 0.05, 0, 0} 
\definecolor{rank3}{cmyk}{0.05, 0.02, 0, 0} 

\usepackage[textsize=tiny]{todonotes}
\icmltitlerunning{ProbMoE: Differentiable Probabilistic Routing for Mixture-of-Experts}

\begin{document}

\twocolumn[
  \icmltitle{ProbMoE: Differentiable Probabilistic Routing for Mixture-of-Experts}



  \icmlsetsymbol{equal}{*}

    \begin{icmlauthorlist}
        \icmlauthor{Heng Zhao}{uva}
        \icmlauthor{Zilei Shao}{ucla}
        \icmlauthor{Guy Van den Broeck}{ucla}
        \icmlauthor{Zhe Zeng}{uva}
    \end{icmlauthorlist}
    
    \icmlaffiliation{uva}{Department of Computer Science, University of Virginia, Charlottesville, USA}
    \icmlaffiliation{ucla}{Department of Computer Science, University of California, Los Angeles, Los Angeles, USA}
    \icmlcorrespondingauthor{Heng Zhao}{qgg5se@virginia.edu}

  \icmlkeywords{Machine Learning, ICML}

  \vskip 0.3in
]



\printAffiliationsAndNotice{}  

\begin{abstract}
Mixture-of-Experts (MoE) models scale by activating only a small subset of experts per token.
However, training such models remains challenging because top-$k$ routing is discrete and non-differentiable, requiring gradient estimators for expert selection whose design remains a central open problem. We introduce ProbMoE, a probabilistic routing framework that models expert selection as a distribution over cardinality-constrained expert subsets and formulates routing as probabilistic inference in this discrete subset space. We first propose ProbMoE Exact-$k$ routing, which samples $k$-expert subsets in the forward pass, and the backward pass uses gradients through each expert's exact marginal probability as a tractable surrogate for the true gradient. ProbMoE naturally generalizes to a dynamic-$k$ routing setting, where both training and inference constrain the routing cardinality to the same predefined range, allowing adaptive expert allocation per token. Across benchmarks and model backbones, ProbMoE Exact-$k$ achieves strong performance compared to competitive baselines, with improved expert utilization and routing diversity; ProbMoE Dynamic-$k$ achieves comparable performance with fewer activated experts. Code is available at: \url{https://github.com/HengHugoZhao/ProbMoE.git}

\end{abstract}

\begin{figure*}[t]
    \centering
    \includegraphics[
        width=\textwidth,
        trim=0 0 0 0,
        clip
    ]{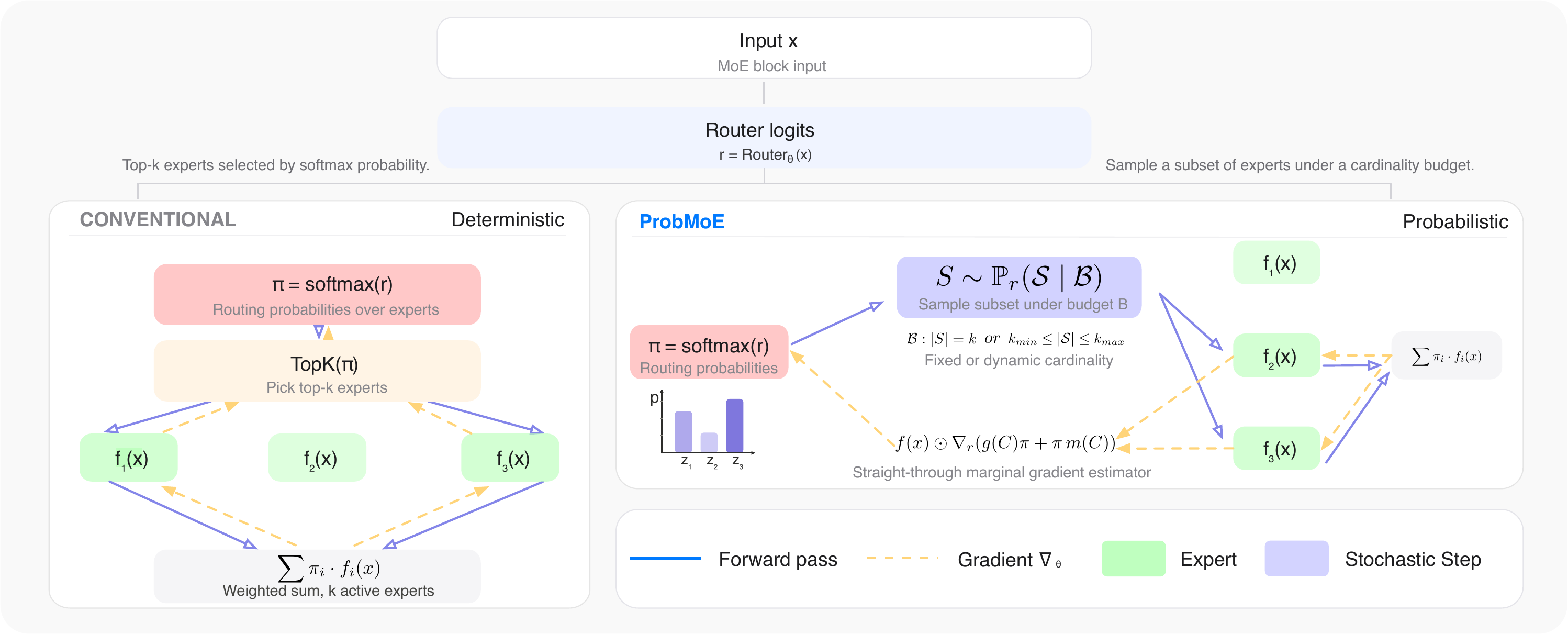}
    \caption{\textbf{Comparison of conventional Top-$k$ training and ProbMoE training.}
    \textbf{Left:} \textbf{Conventional} MoE applies a deterministic top-$k$ operator to the softmax routing probabilities for expert selection, while propagating gradients only through these probabilities. \textbf{Right:} \textbf{ProbMoE} models expert routing as probabilistic inference over discrete expert subsets. ProbMoE samples an expert subset $S$ from a cardinality-constrained distribution. Let $z$ denote the binary mask of the sampled subset and let $m$ denote the corresponding expert-selection marginals. Gradients are propagated through the straight-through mask $g=\operatorname{stopgrad}(z-m)+m$, which is combined with the softmax routing probabilities to form the final routing weights. This yields informative router gradients while preserving sparse expert execution. ProbMoE Dynamic-$k$ allows the subset size to vary within a range, with ProbMoE Exact-$k$ recovered as a special case.}
    \label{fig:framework}
\end{figure*}

\section{Introduction}

Mixture-of-Experts (MoE) architectures have emerged as a central strategy for scaling large language models while keeping computational costs manageable \citep{shazeer2017outrageously, fedus2022switch}. By selecting only a small subset of experts for each token, MoE models achieve sublinear growth in FLOPs and allow the total parameter count to substantially exceed the active compute budget \citep{du2022glam, liu2023sparse, jiang2024mixtral}. 

However, training MoE models remains challenging because routing relies on a hard top-$k$ operator that is discrete and non-differentiable, making it incompatible with standard gradient-based updates. In standard MoE architectures, the router computes expert scores by  applying a softmax operation on expert logits and then selects the top-$k$ experts per token. To bypass differentiating the top-$k$ operator, common training procedures ignore its dependence on the router logits and propagate gradients only through the softmax probabilities~\citep{shazeer2017outrageously, lepikhin2021gshard,rajbhandari2022deepspeed}. As a result, the router receives limited learning signal about alternative expert subsets beyond the deterministically selected top-$k$ experts, which can lead to highly concentrated routing distributions, poor expert utilization, and unstable training dynamics~\citep{lewis2021base, zuo2022taming, clark2022unified}.

These challenges call for a principled routing mechanism. Instead of treating routing as a deterministic operator with heuristic gradient approximations, we propose to cast expert routing as a probabilistic inference problem: for each token, the selection scores of each expert induce a distribution over expert subsets, and selecting $k$ experts corresponds to probabilistic inference under a cardinality constraint. From this perspective, training aims to optimize the expected loss under the expert subset distribution, which requires differentiating with respect to the parameters of a discrete, cardinality-constrained subset distribution. 

Although the space of expert subsets is combinatorial, our proposed probabilistic formulation of MoE routing admits tractable inference.
To optimize this formulation, we adopt SIMPLE~\citep{ahmed2023simple}, a gradient estimator  for cardinality-constrained subset distributions. Our proposed framework ProbMoE uses SIMPLE to sample valid $k$-subsets of experts in the forward pass and differentiate through the induced expert-selection marginals in the backward pass, yielding routing gradients that capture how changes in the selection probability of each expert affect the expected loss. It enables stochastic exploration of expert subsets with informative router updates, while preserving sparse expert execution. 
Moreover, ProbMoE is agnostic to the underlying MoE architecture and can be flexibly integrated into the training of existing MoE models.

Alongside improving fixed-$k$ routing, another ongoing direction in MoE research is dynamic expert allocation, where the number of activated experts can vary across tokens. It is a compelling setting because it allows the model to adapt computation to token- or task-level complexity. Such flexibility can improve efficiency and better match the computational needs of language modeling than a fixed-$k$ setting. However, dynamic-$k$ routing poses a distinct challenge: modeling the choice of expert subset size $k$. Methods designed for exact-$k$ routing assume a fixed subset size and therefore cannot be directly applied to variable cardinality selection, where the router must reason over subsets of different sizes.

To this end, our proposed probabilistic framework naturally unifies exact-$k$ and dynamic-$k$ routing in a principled way, as shown in Figure~\ref{fig:framework}. ProbMoE Exact-$k$ conditions the subset distribution on a fixed cardinality, whereas ProbMoE Dynamic-$k$ conditions the same distribution on a range of feasible cardinalities. Unlike prior adaptive expert allocation methods that rely on thresholding, null experts, or other heuristic gating rules~\citep{yue2025adak, jin2025sparsity, jin2025moe}, ProbMoE Dynamic-$k$ considers a normalized distribution over feasible expert subsets and enables effective training with tractable marginal-based router gradients. 

\textbf{Contributions}
We summarize our main contributions as follows:
(1) we propose to cast MoE routing as probabilistic inference problems over cardinality-constrained expert subsets; 
(2) we adopt the gradient estimator SIMPLE in MoE routing to pair probabilistic subset sampling with marginal-based router gradients; 
(3) we introduce a range-constrained routing variant for dynamic-$k$ MoE routing that jointly infers expert identities and routing cardinality for each token; 
(4) in empirical evaluations, we show that ProbMoE Exact-$k$ improves expert utilization and routing diversity;
(5) in dynamic $k$ setting, experiment results show that ProbMoE Dynamic-$k$ achieves competitive performance with fewer activated experts on average.

\section{Related Work} 
\textbf{Routing Optimization and Differentiability.} Standard Top-$k$ routing is non-differentiable, motivating several strategies to stabilize training. One stream of work modifies gradient propagation without changing the discrete nature of the forward pass, employing techniques such as dense backpropagation \citep{panda2025dense, anonymous2025densemixer} or learned sparse gradient selection \citep{liu2023sparse}. Another line of work relaxes the discrete constraint, using continuous activations to approximate routing decisions \citep{wang2025remoe}. Although effective for improving optimization stability, these methods do not explicitly model the distribution over feasible expert subsets, and therefore provide limited mechanisms for exploring alternative expert combinations during training. Prior routers have also introduced stochastic elements, such as noise injection \citep{shazeer2017outrageously, zibakhsh2025moesstrongerthinkhyperparallel} or randomized tie-breaking \citep{lepikhin2021gshard, fedus2022switch}. However, they primarily treat routing as a deterministic operator augmented with heuristic randomization. To our knowledge, no prior work formulates MoE routing as probabilistic inference over discrete expert subsets, using stochastic subset selection as a principled mechanism for exploration during training.

\textbf{Adaptive Expert Allocation} A parallel line of work moves beyond fixed-capacity routing to explore adaptive allocation, where the number of active experts varies by token difficulty. Approaches such as DA-MoE \citep{aghdam2024damoedynamicexpertallocation}, DynMoE \citep{guo2024dynamic}, and AdaMOE \citep{zeng2024adamoetokenadaptiveroutingnull} trade off model capacity and cost by adjusting expert counts via learned thresholds or null routers. While these and related methods  achieve dynamic sparsity~\citep{yue2025adak, jin2025sparsity, jin2025moe}, they rely on heuristic gating mechanisms rather than a unified probabilistic formulation. Consequently, they lack the ability to perform differentiable selection while maintaining normalization over the combinatorial space of expert subsets.

\textbf{Probabilistic Sampling} Treating the MoE router as a probabilistic layer requires differentiable sampling from discrete distributions, a problem studied extensively outside the MoE context. Common relaxations such as the Concrete and Gumbel-Softmax distributions \citep{maddison2017concrete,jang2017categorical} enable gradient-based optimization but often suffer from high variance or bias. In contrast, \citet{ahmed2023simple} formulates subset selection as probabilistic inference under explicit cardinality constraints and the proposed gradient estimator SIMPLE has shown to be effective in various learning settings~\citep{qian2024probabilistically,shukla2023unified}. We leverage this perspective to view the MoE router not as a deterministic switch but a probabilistic layer with tractable normalization and marginal probabilities.

\section{Preliminaries}
\label{sec:preliminaries}
In this section, we review the standard formulation of MoE routing and formalize why top-$k$ expert selection poses fundamental challenges for gradient-based optimization, pointing toward probabilistic formulations of expert selection.

We consider a MoE layer with $N$ experts indexed by $[N] \triangleq \{1, \dots, N\}$. Given the hidden state of an input token $x \in \mathbb{R}^d$,
the router produces a vector of logits $r = \mathrm{Router}_{\theta}(x) \in \mathbb{R}^N,$ where $\theta$ denotes the router parameters. Each expert is a function $f_i : \mathbb{R}^d \rightarrow \mathbb{R}^d$ indexed by  $i \in [N]$. 

The router logits $r$ are converted into soft routing weights using the softmax function:
\begin{align*}
    \pi_i
    =
    \frac{\exp(r_i)}{\sum_{j=1}^N \exp(r_j)},
    \qquad \forall\, i \in [N].
\end{align*}
A sparse MoE layer then selects a subset of experts using the top-$k$ operator. Let $S_{\mathrm{top}\text{-}k}(r)=\operatorname{TopK}(r)$ denote the set of indices corresponding to the $k$ largest router logits. For any fixed subset $S\subseteq[N]$ with $|S|=k$, define the MoE output associated with selecting $S$ as
\begin{equation}
\label{eq:ys}
    y_S(x;r)
    \triangleq
    \sum_{j\in S} \pi_j f_j(x).
\end{equation}
The standard top-$k$ routed output can therefore be written as
\[
    y(x;r)
    =
    \sum_{\substack{S\subseteq[N]\\ |S|=k}}
    \mathbb{I}\!\left[S=S_{\text{top-}k}(r)\right]\, y_S(x;r).
\]
Differentiating this expression with respect to a router logit $r_i$ reveals two distinct paths:
\begin{equation}
\begin{aligned}
    \frac{\partial y(x;r)}{\partial r_i}
    &=
    \underbrace{
    \sum_{\substack{S\subseteq[N]\\ |S|=k}}
    \mathbb{I}[S=S_{\mathrm{top}\text{-}k}(r)]
    \frac{\partial y_S(x;r)}{\partial r_i}
    }_{\text{softmax-weight path}} \\
    &\quad +
    \underbrace{
    \sum_{\substack{S\subseteq[N]\\ |S|=k}}
    y_S(x;r)
    \frac{\partial \mathbb{I}[S=S_{\mathrm{top}\text{-}k}(r)]}{\partial r_i}
    }_{\text{discrete-selection path}} .
    \label{eq:grad}
\end{aligned}
\end{equation}
The first term in equation ~\eqref{eq:grad} is differentiable because the selected subset is treated as fixed and gradients flow only through the softmax routing weights inside $y_S(x;r)$. In contrast, the second term contains the derivative of the subset-selection indicator $\mathbb{I}[S=S_{\text{top-}k}(r)]$. This indicator is a discrete, piecewise-constant function of the router logits: it remains unchanged within each selection region and changes discontinuously at ranking boundaries. Therefore, this discrete selection path is non-differentiable under standard backpropagation.

Conventional MoE training effectively drops the second term in Equation~\eqref{eq:grad} by treating $S_{\mathrm{top}\text{-}k}(r)$ as fixed during the backward pass. Under this approximation, gradients with respect to $r$ are computed only through the softmax routing weights assigned to the selected experts, giving
\[
\frac{\partial \mathcal{L}}{\partial r_i}
=
\sum_{j\in S_{\mathrm{top}\text{-}k}}
\left\langle
\frac{\partial \mathcal{L}}{\partial y}, f_j(x)
\right\rangle
\frac{\partial \pi_j}{\partial r_i}.
\]
The difficulty of training under top-$k$ routing stems from the discrete nature of expert subset selection. 
Conventional training provides router gradients by modifying how gradients are propagated through the selected experts. 
However, the resulting learning signal remains tied to a deterministic top-$k$ decision, providing limited supervision for the discrete selection mechanism itself.
As a result, alternative expert subsets receive limited exploration during training, which can reinforce concentrated routing patterns and worsen expert under-utilization. 
Prior approaches partially mitigate this issue by using dense straight-through estimator to provide proxies for gradients for the discrete selection~\citep{lewis2021base, panda2025dense, anonymous2025densemixer}, but these gradients are still defined over individual expert scores rather than over the structured subset selection process. This motivates a formulation that treats routing in a principled way as probabilistic inference over expert subsets. 

\section{ProbMoE for Exact-$k$ routing}
\label{sec:method}
We begin by modeling MoE routing as a \emph{probabilistic latent layer}. In a standard MoE layer, each token is routed to an expert subset, but top-$k$ routing selects this subset deterministically and is non-differentiable with respect to the router logits. Rather than treating expert selection as a fixed discrete decision in the backward pass, ProbMoE considers the distribution over feasible expert subsets and trains the router by optimizing the expected loss under this distribution, defined as below
\begin{equation}
    \mathcal{J}(\theta)
    =
    \mathbb{E}_{S \sim \mathbb{P}_{r}(\cdot \mid |S|=k)}
    \left[
    \mathcal{L}\big(y_S(x;r)\big)
    \right],
\label{eq:expected_loss}
\end{equation}
where $\mathbb{P}_{r}(\cdot \mid |S|=k)$ is the router-induced distribution over feasible subsets $S \subseteq [N]$ with $|S|=k$, and $\mathcal{L}\big(y_S(x;r)\big)$ is the point-wise task loss induced by the MoE output when subset $S$ is selected. This objective optimizes the router over a distribution of feasible expert subsets, allowing alternative expert combinations to influence training.

\textbf{Exact-$k$ subset distribution} We formally define the router-induced distribution $\mathbb{P}_{r}(\cdot \mid |S|=k)$ used in Equation~\eqref{eq:expected_loss}. For each expert $i$, let $p_i=\sigma(r_i)$ denote its Bernoulli selection probability, where $\sigma(\cdot)$ is the sigmoid function. These Bernoulli variables induce an unconstrained product distribution over expert subsets. To model exact-$k$ routing, we condition this distribution on the constraint that exactly $k$ experts are selected. For any subset $S \subseteq [N]$ with $|S|=k$, the probability of this subset is
\[
    \mathbb{P}_{r}(S \mid |S|=k)
    =
    \frac{1}{Z_k}
    \prod_{j\in S} p_j
    \prod_{j\notin S}(1-p_j),
    \label{eq:exact_k_distribution}
\]
where the normalizing constant is
\begin{equation}
    Z_k
    =
    \sum_{\substack{S\subseteq [N], |S|=k}}
    \prod_{j\in S} p_j
    \prod_{j\notin S}(1-p_j).
\label{eq:exact_k_normalizer}
\end{equation}
\begin{theorem}[\citet{ahmed2023simple}]
\label{prop:exact_k_normalization}
The normalizing constant $Z_k$ in Equation~\eqref{eq:exact_k_normalizer} can be computed exactly in time $\bigO(Nk)$. Moreover, it can be computed in a vectorized complexity $\bigO(\log N \log k)$ assuming perfect parallelization.
\end{theorem}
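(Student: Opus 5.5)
The plan is a dynamic program over experts. Define, for $0\le i\le N$ and $0\le j\le k$, the partial normalizer
\[
A(i,j) \;=\; \sum_{\substack{S\subseteq[i],\,|S|=j}} \prod_{l\in S} p_l \prod_{l\in[i]\setminus S}(1-p_l).
\]
This is the probability that exactly $j$ of the first $i$ independent Bernoulli variables equal one, and $Z_k=A(N,k)$. Conditioning on whether expert $i$ is selected gives the recurrence
\[
A(i,j)=p_i\,A(i-1,j-1)+(1-p_i)\,A(i-1,j),
\]
with boundary conditions $A(0,0)=1$, $A(0,j)=0$ for $j>0$, and $A(i,-1)=0$.

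Correctness follows by induction on $i$. Subsets of $[i]$ of size $j$ split disjointly into those containing $i$, which are in bijection with size-$(j-1)$ subsets of $[i-1]$ and carry factor $p_i$, and those not containing $i$, which carry factor $(1-p_i)$. Only entries with $j\le k$ are needed, since each $A(i,j)$ depends only on $A(i-1,\cdot)$ at indices $\le j$. The table therefore has $(N+1)(k+1)$ entries, each computed in $\bigO(1)$ time, giving $\bigO(Nk)$ sequentially. In practice I would run the recursion in log-space with $\logaddexp$ for numerical stability; this does not change the complexity.

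For the parallel bound, I would restate the same quantity polynomially. $Z_k$ is the coefficient of $t^k$ in the generating polynomial
\[
G(t)=\prod_{i=1}^N\bigl((1-p_i)+p_i t\bigr),
\]
and the DP above is just sequential multiplication of these linear factors with coefficients truncated at degree $k$. Since polynomial multiplication truncated mod $t^{k+1}$ is associative, the product can be computed by a balanced binary tree reduction of depth $\lceil\log_2 N\rceil$. Each level multiplies pairs of truncated polynomials of degree $\le k$ independently in parallel. Each output coefficient $c_m=\sum_{a+b=m}u_a v_b$ is a sum of at most $k+1$ terms, which a parallel reduction computes in depth $\bigO(\log k)$, and all $k+1$ coefficients are computed simultaneously. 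The total depth is $\bigO(\log N\cdot\log k)$.

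The main obstacle is making the parallel claim precise. "Perfect parallelization" must mean unbounded processors with unit-cost arithmetic. I also need to check that truncation commutes with the tree reduction, i.e.\ that $(uv)\bmod t^{k+1}$ depends only on $u\bmod t^{k+1}$ and $v\bmod t^{k+1}$, which is immediate since discarded high-degree terms never contribute to lower degrees. Beyond that, the argument is the standard divide-and-conquer presentation of the SIMPLE dynamic program of \citet{ahmed2023simple}.
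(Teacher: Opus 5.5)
Your proposal is correct and follows the paper's argument: the appendix proof for the range-constrained constant $Z^*$, which specializes to $Z_k$, uses your recurrence $A(i,j)=p_iA(i-1,j-1)+(1-p_i)A(i-1,j)$ with the same boundary conditions to get the $\mathcal{O}(Nk)$ bound. The paper never argues the $\mathcal{O}(\log N\log k)$ vectorized claim itself and defers to SIMPLE. Your balanced-tree reduction of truncated polynomial products fills that gap, and it has the same pairwise-merge structure the paper uses in \textsc{CompileExactlyK}.
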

 
Thus, ProbMoE can define and normalize a distribution over exact-$k$ expert subsets while remaining tractable.

\subsection{Forward pass}
ProbMoE samples an expert subset from the aforementioned exact-$k$ distribution in the forward pass:
\[
    S \sim \mathbb{P}_{r}(S \mid |S|=k).
\]
The MoE layer then computes the subset-specific output in Equation~\eqref{eq:ys} using the sampled subset $S$. Since $|S|=k$, each forward pass still evaluates only $k$ experts per token, preserving the sparse execution pattern of standard MoE routing. Unlike deterministic top-$k$ routing, which always selects the experts with the largest router scores, this stochastic forward pass can explore alternative expert combinations. As a result, high probability alternatives to the top-$k$ set can be explored during training, preventing the same expert combination from being reinforced repeatedly while other nearly competitive subsets remain under-trained.  

At inference time, ProbMoE selects the Maximum-A-Posteriori (MAP) set, making the inference cost of ProbMoE Exact-$k$ same as the conventional MoE routing.

\subsection{Backward pass}
Having defined the expected loss objective in Equation~\eqref{eq:expected_loss}, we now describe how ProbMoE optimizes it in practice. Ideally, updating the router would require differentiating the expectation with respect to the router parameters:
\begin{equation}
    \nabla_\theta \mathcal{J}(\theta)
    =
    \nabla_\theta
    \mathbb{E}_{S \sim \mathbb{P}_{r}(\cdot \mid |S|=k)}
    \left[
        \mathcal{L}\big(y_S(x;r)\big)
    \right].    
\label{eq:grad_expect_loss}
\end{equation}
 However, the sampled mask is discrete, so gradients cannot be propagated directly through the expert selection decision under standard backpropagation. Therefore, the backward pass requires a differentiable proxy that captures how changes in the router logits affect the sampled subset. 

ProbMoE addresses this challenge by using conditional marginals, as a surrogate for the discrete sample in the backward pass as in SIMPLE~\citep{ahmed2023simple}. For each expert, the marginal measures its probability of being selected under the full exact-$k$ subset distribution. Thus, the marginals provide a continuous, differentiable summary of the discrete selection process. Treating the non-selection factors $(1-p_j)$ as constants when differentiating with respect to $\log p_j$, the marginal probability of expert $j$ is
\begin{equation}
    m_j
    \triangleq
    \mathbb{P}_{r}(j\in S \mid |S|=k)
    =
    \frac{\partial \log Z_k}{\partial \log p_j}.
    \label{eq:marginal_exact} 
\end{equation}
Since $Z_k$ is computed exactly by dynamic programming, these marginals are exact under the conditional exact-$k$ distribution and provide a tractable proxy for expert selection.

\textbf{Marginal-integrated routing weights} 
The marginals alone describe selection probabilities, but the MoE layer ultimately requires routing weights that determine how expert outputs are combined. The MoE output in Equation \eqref{eq:ys} is a weighted combination of expert outputs, so the routing mechanism must determine both which experts are selected and how their contributions are weighted. Incorporating the differentiable marginals into the routing weights allows learning to influence both discrete selection decisions and the relative contribution of selected experts, while remaining consistent with the underlying stochastic subset formulation. Let the sampled subset be represented by a $k$-hot mask $z\in\{0,1\}^N$, where $z_i=1$ if expert $i$ is selected and $z_i=0$ otherwise. ProbMoE combines the sampled forward mask, the marginals, and the soft routing weights through a straight-through estimator~\citep{bengio2013estimating}:
\begin{align}
    w
    =
    \big(\stopgrad(z-m)+m\big)\odot \pi,
    \label{eq:ste_weights}
\end{align}
where $\stopgrad(\cdot)$ blocks gradients through its argument. In the forward pass, $\stopgrad(z-m)+m=z$, so the sampled mask is preserved. In the backward pass, gradients flow through both $m$ and $\pi$, enabling joint optimization of expert selection and expert weighting.

\textbf{Resulting router gradients}
With the routing weights in Equation~\eqref{eq:ste_weights}, the ProbMoE gradient with respect to the router logit $r_i$ decomposes as
\begin{equation}
    \frac{\partial \mathcal{L}}{\partial r_i}
    =
    \sum_{j=1}^N
    \Big\langle
        \frac{\partial \mathcal{L}}{\partial y},
        f_j(x)
    \Big\rangle
    \left(
    m_j
    \frac{\partial \pi_j}{\partial r_i}
    +
    \pi_j
    \frac{\partial m_j}{\partial r_i}.
    \right)
\label{eq:ProbMoE_gradient}
\end{equation}
The first term in Equation~\eqref{eq:ProbMoE_gradient} updates the softmax weighting of expert outputs, while the second term propagates gradients through the exact marginal probabilities. Thus, ProbMoE approximates the gradient of the expected loss objective in Equation~\eqref{eq:grad_expect_loss} by differentiating the sampled loss through the marginal integrated routing weights in Equation~\eqref{eq:ste_weights}:
$
    \nabla_\theta \mathcal{J}(\theta)
    \approx
    \nabla_\theta
    \mathcal{L}\big(y(x;r)\big)
$.
In a synthetic experiment detailed in Appendix~\ref{app:grad_var}, we show that this estimator yields lower gradient variance than that in DenseMixer.

\begin{table*}[t]
\centering
\begin{threeparttable}

    \caption{\textbf{Performance comparison across benchmarks.} Top-3 are highlighted; darker colors indicate better performance. GSM is evaluated by Exact Match; Law, Translation, and Summary by LLM-as-judge; MBPP and MMLU by LM Evaluation Harness.}
    \label{tab:backbone_compare}
    \small
    \begin{tabularx}{\textwidth}{@{}l *{7}{Y}@{}}
        \toprule
        {Model} 
        & GSM 
        & Law 
        & \makecell{Translation}
        & MBPP
        & \makecell{Summary}
        & \makecell{MMLU\\Overall}
        & \makecell{MMLU\\Stem}\\
        \midrule

        \multicolumn{8}{l}{\footnotesize \textbf{OLMoE Backbone (Top-$k = 8$)}} \\
        \addlinespace
        Base Model 
        & 15.85                           & 5.70                            & 11.09                           & 19.80                            & 7.40                             & 53.40                 & 44.91\\
        Frozen Router
        & 44.88                           & 22.50          & \cellcolor{rank3}28.29          & 17.80                            & \cellcolor{rank3}36.05           & \cellcolor{rank2}53.97& \cellcolor{rank2}46.18\\
        Conventional 
        & \cellcolor{rank3}45.94          & \cellcolor{rank3}25.00                           & 27.56                           & \cellcolor{rank2}23.20           & 33.70                            & \cellcolor{rank1}\textbf{54.04} & \cellcolor{rank1}\textbf{46.46}\\ 
        DenseMixer
        & \cellcolor{rank2}47.00          & \cellcolor{rank2}27.90          & \cellcolor{rank2}30.32 & \cellcolor{rank1}\textbf{24.40}  & \cellcolor{rank2}37.50           & \cellcolor{rank3}53.95 & \cellcolor{rank2}46.18\\
        \textbf{ProbMoE}
        & \cellcolor{rank1}\textbf{50.19} & \cellcolor{rank1}\textbf{29.00} & \cellcolor{rank1}\textbf{31.63}          & \cellcolor{rank3}22.80           & \cellcolor{rank1}\textbf{39.29}  & 53.69       & 45.54\\
        
        \addlinespace
        \midrule

        \multicolumn{8}{l}{\footnotesize \textbf{Qwen backbone (Top-$k = 4$)}} \\
        \addlinespace
        Base Model 
        & 38.69                           & 18.20                            & 16.53                           & \cellcolor{rank1}\textbf{38.84} & 28.29                      & 60.85     & 52.27\\
        Frozen Router
        & \cellcolor{rank2}53.37          & \cellcolor{rank2}33.01           & \cellcolor{rank3}32.75                           & \cellcolor{rank2}35.20          & 38.29                      & \cellcolor{rank1}\textbf{61.08}     & \cellcolor{rank2}53.19\\
        Conventional 
        & \cellcolor{rank3}53.30          & 29.50                            & 30.00          & 32.80                           & \cellcolor{rank3}39.00                      & \cellcolor{rank3}61.03     & \cellcolor{rank3}53.16\\
        DefaultMoE 
        & 51.00                           & \cellcolor{rank3}31.20                            & 24.00                           & 33.20                           & 38.40     &  --       &  --\\
        SparseMixer 
        & 1.30                            & 3.40                             & 3.50                            & 0.00                            & 2.10                       &  --       & --\\
        ReMoE
        & 46.30                           & 25.50                            & 16.99                           & 33.00                           & 25.80                      & --        & --\\
        DenseMixer
        & \cellcolor{rank1}\textbf{54.97} & 30.75           & \cellcolor{rank2}33.75          & 34.00                           & \cellcolor{rank2}41.00           & \cellcolor{rank3}61.03  & 52.87\\
        \textbf{ProbMoE}
        & 53.29                           & \cellcolor{rank1}\textbf{34.40}  & \cellcolor{rank1}\textbf{39.23} & \cellcolor{rank3}35.00          & \cellcolor{rank1}\textbf{44.40}  & \cellcolor{rank2}61.05  & \cellcolor{rank1}\textbf{53.82}\\
        \bottomrule
    \end{tabularx}

\begin{tablenotes}
\footnotesize
\item[] \textit{Note: Baseline results are taken from the GitHub of \citet{anonymous2025densemixer}. The MMLU results for pretraining-based baselines are not included because reproducing these methods under the same setting would require substantially more compute than is available.}
\end{tablenotes}

\end{threeparttable}
\end{table*}

\section{ProbMoE for Dynamic-$k$ routing}
\label{sec:method_dynamick}
ProbMoE Exact-$k$ formulation in the previous section provides a differentiable way to train MoE routers when every token is assigned the same number of experts. However, different tokens may require different amounts of expert capacity depending on their ambiguity, rarity, or task-specific complexity. This motivates dynamic MoE routing, where the number of active experts is allowed to vary across tokens. Because of the probabilistic routing formulation of ProbMoE, it naturally extends beyond fixed-cardinality expert selection. Once expert routing is modeled as probabilistic inference over subsets, conditioning on a single cardinality becomes a modeling choice rather than a necessity. This observation motivates ProbMoE \emph{Dynamic-$k$} routing, where the subset cardinality itself is treated as a discrete latent variable to be inferred jointly with expert identities.

Formally, ProbMoE Dynamic-$k$ defines a range constraint on the selected subset size, $k \in [k_{\min}, k_{\max}]$. Rather than conditioning the router induced distribution on a fixed cardinality constraint, ProbMoE Dynamic-$k$ conditions it on the broader cardinality range. The induced distribution can be factorized by first sampling a cardinality $k$ from the marginal distribution over feasible subset sizes and then sampling an expert subset conditioned on $|S|=k$. This factorization provides an efficient sampling procedure, while the resulting routing decision remains a sampled subset of experts. Therefore, ProbMoE Dynamic-$k$ can adapt the routing budget per token while retaining the same probabilistic framework.

\textbf{Range-constrained expert selection}
ProbMoE Dynamic-$k$ routing extends the probabilistic subset formulation by relaxing the fixed-cardinality constraint. As in ProbMoE Exact-$k$ routing, each expert $i$ is associated with an independent Bernoulli selection probability $p_i=\sigma(r_i)$, defining an unconstrained product measure over expert subsets $S \subseteq [N]$. ProbMoE Dynamic-$k$ conditions on the constraint that the subset size lies in a predefined range, $k_{\min} \le |S| \le k_{\max}$. This yields the following distribution:
\[
\scalemath{0.94}{
    \mathbb{P}_{r}(S \mid  k_{\min} \le |S| \le k_{\max})
    =
    \frac{1}{Z^*}
    \prod_{j\in S} p_j
    \prod_{j\notin S}(1-p_j),
}
\]
where $Z^*$ denotes the normalization constant.

\begin{theorem}
\label{prop:dynamic_k_normal}
Let $Z_k$ be the normalizing constant as in Equation~\eqref{eq:exact_k_normalizer}. Then the range-constrained normalizing constant, $Z^*=\sum_{k=k_{\min}}^{k_{\max}} Z_k$, can be computed in time $\bigO(Nk_{\max})$. Moreover, it admits a vectorized complexity $\bigO(\log N \log k_{\max})$ assuming perfect parallelization.
\end{theorem}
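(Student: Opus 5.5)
The plan is to reuse the dynamic program behind Theorem~\ref{prop:exact_k_normalization}, observing that it already yields every $Z_k$ with $k\le k_{\max}$ simultaneously, so summing over the range adds only negligible cost. Define, for $0\le n\le N$ and $0\le c\le k_{\max}$, the partial quantity
\[
A_{n,c}=\sum_{\substack{S\subseteq[n],\,|S|=c}}\prod_{j\in S}p_j\prod_{j\in[n]\setminus S}(1-p_j),
\]
which is the probability that exactly $c$ of the first $n$ Bernoulli variables equal one. It satisfies $A_{0,0}=1$, $A_{0,c}=0$ for $c>0$, and the recurrence $A_{n,c}=p_nA_{n-1,c-1}+(1-p_n)A_{n-1,c}$ (with $A_{n-1,-1}=0$). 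This follows by splitting on whether expert $n$ is in $S$. Then $Z_k=A_{N,k}$ for every $k$.

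The table has $(N+1)(k_{\max}+1)$ entries, each computed in $\bigO(1)$, so all of $Z_{0},\dots,Z_{k_{\max}}$ are obtained in $\bigO(Nk_{\max})$ time. The key point is that entries with $c>k_{\max}$ never feed into entries with $c\le k_{\max}$, because the recurrence only decreases or preserves $c$, so truncating the table at $k_{\max}$ is exact. The final sum $Z^*=\sum_{k=k_{\min}}^{k_{\max}}Z_k$ costs an additional $\bigO(k_{\max})$. In practice I would run the recurrence in log-space using $\logaddexp$ for stability; this does not affect the complexity.

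For the parallel bound, I would view $A_{N,\cdot}$ as the coefficient vector of the polynomial $\prod_{j=1}^N\big((1-p_j)+p_jx\big)$ truncated to degree $k_{\max}$. Truncation commutes with multiplication in the sense that the truncation of a product equals the truncation of the product of truncations. I would therefore multiply the $N$ linear factors in a balanced binary tree of depth $\lceil\log N\rceil$, truncating each intermediate product to degree $k_{\max}$. Each level is a batch of independent truncated polynomial products of length at most $k_{\max}+1$. Each coefficient of such a product is a sum of at most $k_{\max}+1$ terms, which a parallel reduction computes in depth $\bigO(\log k_{\max})$. This gives total depth $\bigO(\log N\log k_{\max})$, which is exactly the vectorized argument underlying Theorem~\ref{prop:exact_k_normalization}, now with $k$ replaced by $k_{\max}$. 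The last step, summing the range $[k_{\min},k_{\max}]$, is a reduction of depth $\bigO(\log k_{\max})$ and is absorbed into this bound.

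The main point needing care is the truncation argument: we must justify that discarding degrees above $k_{\max}$ at every intermediate node does not corrupt the coefficients of degree at most $k_{\max}$. This holds because polynomial multiplication never lowers degree, so no term of degree above $k_{\max}$ can contribute to a coefficient of degree at most $k_{\max}$. Everything else is a direct reuse of Theorem~\ref{prop:exact_k_normalization}.
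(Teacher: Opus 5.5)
Your sequential argument is the paper's proof. It uses the same table $A(i,k)$, the same recurrence and boundary conditions, the identification $A(N,k)=Z_k$, and the same $O(Nk_{\max})$ entry count. The paper additionally writes out the grouping-by-cardinality step showing that $\sum_{k=k_{\min}}^{k_{\max}} Z_k$ equals the total Bernoulli mass of all subsets with $k_{\min}\le |S|\le k_{\max}$; you take this identity directly from the statement, which is acceptable but worth one line.

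Where you differ is the vectorized claim. The paper's proof ends after the sequential count and never justifies the $O(\log N\log k_{\max})$ depth, presumably deferring to the SIMPLE result behind Theorem~\ref{prop:exact_k_normalization}. Your balanced-tree product of the linear factors $(1-p_j)+p_jx$, truncated to degree $k_{\max}$ at every node, does prove that bound, and your truncation argument is correct. This tree has the same pairing structure as the paper's Algorithm~\ref{alg:compile-exactly-k}, which at each level convolves the count distributions of the two halves, so your proof matches how the method is actually implemented.

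One optional addition: the tree's total work is $\sum_\ell (N/2^\ell)\,O(\min(2^\ell,k_{\max})^2)=O(Nk_{\max})$. So the parallel scheme achieves the stated depth without exceeding the work of the chain DP.
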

Proof is provided in Appendix~\ref{prop:dynamic_k_normal_app}. Thus, the probability mass is renormalized over all subsets whose cardinality lies in $[k_{\min}, k_{\max}]$ using the tractable normalization constant.

\textbf{Adaptive cardinality inference.}
Conditioning on the range $\mathcal{B}$ also induces a conditional distribution over subset sizes, 
\begin{equation}
\begin{split}
    &\mathbb{P}_r(|S| = k \mid k_{\min} \le |S| \le k_{\max})\\
    &=
    \frac{\mathbb{P}_r(|S|=k)}
    {\sum_{k^{\prime}=k_{\min}}^{k_{\max}} \mathbb{P}_r(|S|=k^{\prime})},
    \quad
    k \in [k_{\min}, k_{\max}].
\end{split}    
\label{eq:band_dist}
\end{equation}

allowing the model to infer how many experts to allocate to each token. During training, we first sample a cardinality $k$ from this distribution in Equation~\eqref{eq:band_dist} and then sample an expert subset conditioned on $|S|=k$ using ProbMoE Exact-$k$ procedure. This can be interpreted as jointly inferring a token-specific routing cardinality and the corresponding expert subset within a single probabilistic model. We discuss practical choices of the $k_{\min}:k_{\max}$ range in Appendix~\ref{app:dynamic_range}.

ProbMoE Dynamic-$k$ retains differentiable expert-selection marginals for the backward pass. In particular, under the range constraint, each expert's marginal probability can be obtained by differentiating the range-constrained normalizing constant with respect to the Bernoulli parameters. These marginals provide continuous surrogates for discrete expert selection during backpropagation, allowing router gradients to reflect the full range-constrained subset distribution.
\begin{proposition}
\label{prop:range_marginals}
Let $Z^*$ denote the normalizing constant over all subsets satisfying $k_{\min}\le |S|\le k_{\max}$. For each expert $j$, the marginal probability under the range-constrained distribution satisfies that
\begin{shrinkeq}{-1ex}
\[
    m_j^{*}
    \triangleq
    \mathbb{P}_r(z_j = 1\mid k_{\min}\le |S|\le k_{\max})
    =
    \frac{\partial \log Z^*}{\partial \log p_j}.
\]
\end{shrinkeq}
\end{proposition}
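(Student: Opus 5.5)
The plan is to treat $Z^*$ as a multilinear polynomial in the Bernoulli parameters and differentiate it term by term. As in the exact-$k$ case \eqref{eq:marginal_exact}, the non-selection factors $(1-p_j)$ are held constant when differentiating with respect to $\log p_j$. Concretely, introduce independent variables $a_j = p_j$ for the selection factors and $b_j = 1-p_j$ for the non-selection factors, and write
\[
    Z^*(a,b) = \sum_{\substack{S\subseteq[N]\\ k_{\min}\le |S|\le k_{\max}}} \prod_{i\in S} a_i \prod_{i\notin S} b_i .
\]
The derivative $\partial/\partial \log p_j$ in the statement is then interpreted as $a_j\,\partial/\partial a_j$ at fixed $b$. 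This is exactly the convention the paper uses for Equation~\eqref{eq:marginal_exact}.

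The key step is the observation that each monomial has degree exactly one in $a_j$ if $j\in S$ and degree zero otherwise. Hence
\[
    a_j \frac{\partial}{\partial a_j}\prod_{i\in S} a_i \prod_{i\notin S} b_i = \mathbb{I}[j\in S]\prod_{i\in S} a_i \prod_{i\notin S} b_i .
\]
Summing over the feasible subsets gives
\[
    a_j\,\partial Z^*/\partial a_j = \sum_{S \text{ feasible},\, j\in S} w(S),
\]
where $w(S)$ is the unnormalized product weight. Dividing by $Z^*$ and using $\partial \log Z^* /\partial \log a_j = (a_j/Z^*)\,\partial Z^*/\partial a_j$, we obtain $\sum_{S\ni j} w(S)/Z^*$. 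By the definition of $\mathbb{P}_r(\cdot\mid k_{\min}\le|S|\le k_{\max})$, this is $\mathbb{P}_r(z_j=1\mid k_{\min}\le|S|\le k_{\max}) = m_j^*$.

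As a consistency check and alternative route, use $Z^*=\sum_{k=k_{\min}}^{k_{\max}} Z_k$ from Theorem~\ref{prop:dynamic_k_normal}. Linearity of the derivative together with \eqref{eq:marginal_exact} gives
\[
    \partial \log Z^*/\partial \log p_j = \sum_k (Z_k/Z^*)\, m_j^{(k)} .
\]
Here $m_j^{(k)}$ is the exact-$k$ marginal and $Z_k/Z^*$ is the cardinality distribution \eqref{eq:band_dist}. By the law of total probability over $|S|=k$, this again equals $m_j^*$, which matches the factorized sampling procedure described above.

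The only real obstacle is the differentiation convention. If $\partial/\partial\log p_j$ were taken as a total derivative, so that $1-p_j$ also varies, an extra term $-p_j\sum_{S\not\ni j} w(S)/(1-p_j)$ would appear and the identity would fail as stated. I would state the convention explicitly, matching the paper's remark before \eqref{eq:marginal_exact}. An equivalent way to phrase it is as the derivative with respect to the log-odds $\log(p_j/(1-p_j))$ of the weight-normalized partition function $Z^*/\prod_i(1-p_i)$. The rest is routine bookkeeping.
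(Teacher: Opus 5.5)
Your proposal is correct and follows essentially the same route as the paper's proof: differentiate the sum over feasible subsets term by term with the non-selection factors $(1-p_j)$ held fixed, so only subsets containing $j$ survive with their weights reproduced, then divide by $Z^*$ via the chain rule for $\log Z^*$. Your extras go beyond the paper but are valid: the cross-check via $Z^*=\sum_k Z_k$ with the law of total probability, and the log-odds reformulation that makes the identity hold without invoking the stop-gradient convention.
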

We provide the detailed proof in Appendix~\ref{prop:range_marginals_app}. These range-constrained marginals provide differentiable expert-selection probabilities under the dynamic-$k$ subset distribution. During training, we use them in the same straight-through routing-weight construction as in ProbMoE Exact-$k$ routing in Equation~\eqref{eq:ste_weights}, replacing $m_j$ with $m_j^{*}$. At inference time, ProbMoE Dynamic-$k$ selects the MAP subset under the range constraint, allowing the model to choose both the expert identities and the number of active experts within $[k_{\min},k_{\max}]$. We summarize this process in Algorithm~\ref{alg:ProbMoE-dynamic}.

\section{Experimental Setup}
\textbf{Models}
We evaluate our method on two representative MoE architectures: OLMoE-$1B$-$7B$~\citep{muennighoff2024olmoe}, which activates $1B$ parameters out of $7B$ total, and Qwen$1.5$-MoE-A$2.7B$~\citep{qwen1.5}, which contains $14.3B$ parameters in total with $2.7B$ activated at inference time. OLMoE follows a standard sparse MoE design with independent experts, consisting of $16$ transformer layers, each with $64$ experts, from which exactly $8$ experts are activated per token. In contrast, Qwen employs a shared-expert architecture with $24$ layers, where each layer contains $60$ routed experts alongside $4$ shared experts. All routing methods are applied consistently within each backbone, preserving the expert parameterization and architectural design. For Qwen, ProbMoE is applied only to the routed expert subset, leaving the shared experts unchanged, ensuring a fair comparison. 

\textbf{Baselines}
We compare against several strong routing strategies. \textit{Frozen Router} serves as a control baseline in which the router parameters are kept fixed during training. \textit{Conventional} refers to the standard top-$k$ MoE routing with discrete expert selections and sparse backpropagation. \textit{DenseMixer} uses straight-through gradient estimators and performs dense gradient propagation during training while retaining sparse top-$k$ routing at inference time. For the Qwen backbone, we include additional baselines commonly used in prior work. \textit{DefaultMoE} provides dense gradient signals to the router by approximating unselected experts, enabling dense backpropagation for the gating mechanism while maintaining sparse expert activation~\citep{panda2025dense}. \textit{SparseMixer} introduces a learned sparse backpropagation mechanism that dynamically determines which experts receive gradients, balancing efficiency and stability~\citep{liu2023sparse}. Finally, \textit{ReMoE} replaces discrete routing with continuous ReLU-based expert selection, enabling fully differentiable training without straight-through estimators~\citep{wang2025remoe}. Training details and hyperparameters for reproducibility are provided in Appendix~\ref{app:training}.

For a fair and controlled comparison, we follow the experimental setup of DenseMixer~\citep{anonymous2025densemixer}, using the same datasets, data splits, and evaluation protocols. We evaluate ProbMoE across a diverse set of tasks, including mathematical reasoning on GSM8K~\citep{cobbe2021gsm8k}, legal-domain understanding, machine translation and summarization~\citep{wang2024let}, and code generation. For the coding domain, we fine-tune on CodeAlpaca~\citep{codealpaca} and evaluate on MBPP~\citep{austin2021programsynthesislargelanguage}. Using the same CodeAlpaca fine-tuned model, we additionally report MMLU~\citep{hendrycks2021measuring} to assess general knowledge retention, with MMLU-Stem reported separately as a code-relevant subset. See Appendix~\ref{app:eval_protocol} for evaluation.

\begin{figure}[t]
    \centering
    \includegraphics[width=0.8\linewidth]{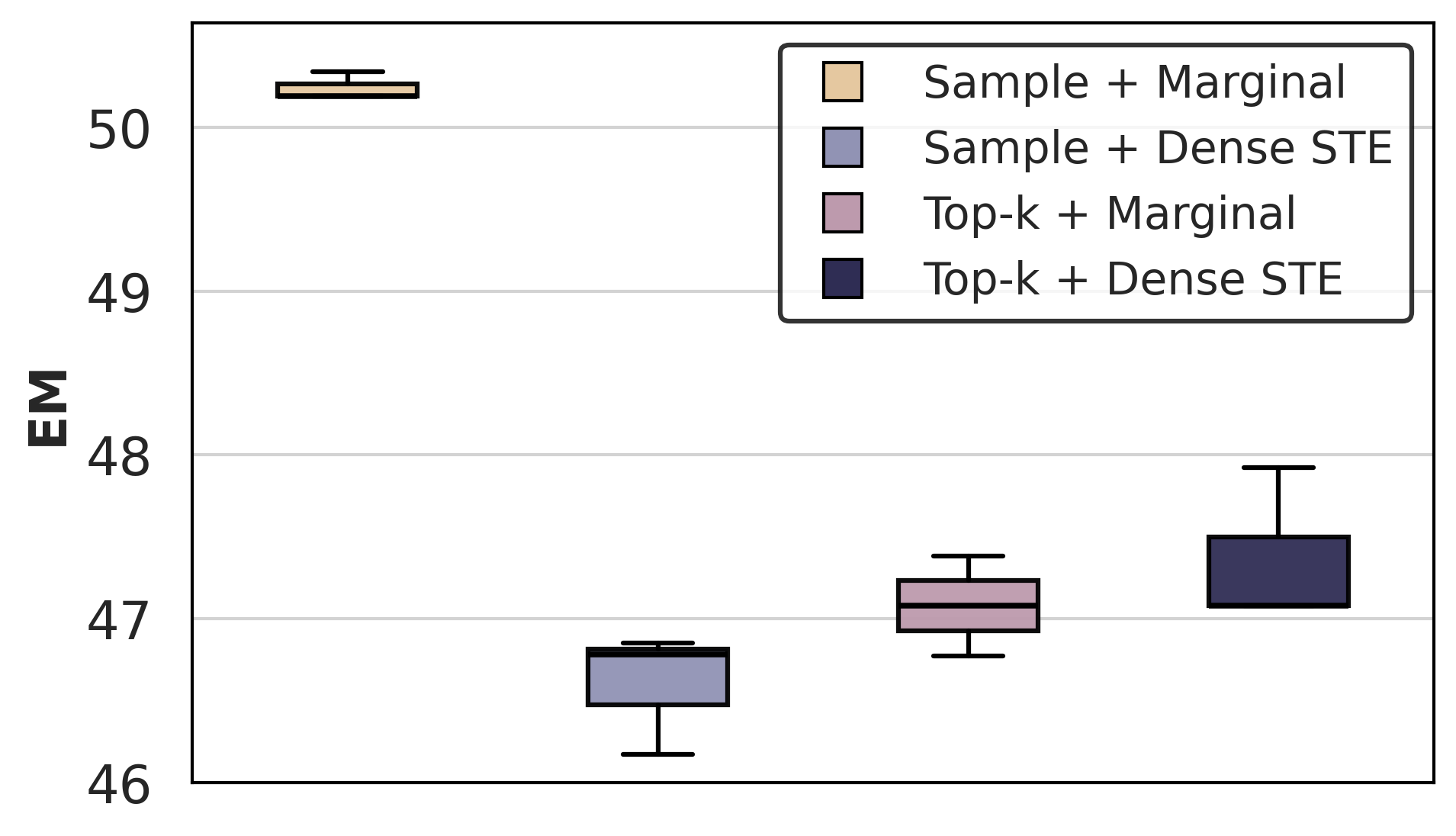}
    \caption{\textbf{Ablation study of forward routing and backward gradient estimation under exact-$k$ routing on OLMoE for GSM.} Box plots show exact-match (EM) accuracy, where higher is better.}
    \vspace{-2em}
    \label{fig:ablation_study}
\end{figure}
\label{sec:ablation}

\section{Exact-$k$ Routing Experiments}
This section evaluates ProbMoE under the exact-$k$ routing setting across multiple backbones and tasks, comparing overall performance and routing behavior against baselines. As shown in Table~\ref{tab:backbone_compare}, ProbMoE matches or outperforms strong baselines across backbones and tasks. In particular, ProbMoE achieves state-of-the-art results on multiple benchmarks under both OLMoE and Qwen backbones, demonstrating the effectiveness of probabilistic subset routing across diverse settings. These gains are notable given that ProbMoE preserves sparse expert execution and sparse expert-weight updates at both training and inference time, while providing dense learning signals only to the router, whereas DenseMixer introduces dense expert-side computation and gradient updates during training. Performance on GSM under the Qwen backbone exhibits a different pattern: ProbMoE does not rank among the top three methods, but remains comparable to Conventional Routing. To understand when and why ProbMoE improves performance, we next analyze routing distributions and expert utilization. These analyses show that ProbMoE induces better-calibrated routing distributions and more stable expert engagement than baseline methods thanks to the probabilistic framework.

\subsection{Ablation Study of the Probabilistic Framework}
To understand whether the gains of ProbMoE arise from stochastic expert sampling, marginal-based optimization, or their combination, we perform an ablation study where we decouple the forward routing mechanism from the backward gradient estimator. We evaluate four routing configurations on the OLMoE backbone, each fine-tuned over three independent random seeds: \textbf{Routing variants:} ProbMoE (Sample $+$ Marginal): stochastic sampling with exact marginal gradients; DenseMixer (Top-$k$ $+$ Dense STE): deterministic Top-$k$ with dense STE gradients; Sample + Dense STE: stochastic sampling with STE gradients; Top-$k$ + Marginal: deterministic Top-$k$ with exact marginal gradients.

As shown in Figure~\ref{fig:ablation_study}, the ProbMoE configuration achieves the highest mean Exact Match (EM) score (50.24\%) with low variance ($\sigma \approx 0.09$). 
In contrast, the other combinations exhibit clear degradation. \textit{Sample + Dense STE} reduces performance to 46.6\% and substantially increases variance ($\sigma \approx 0.37$). These results indicate that the gains of ProbMoE stem from the alignment between probabilistic expert selection and marginal-based optimization.
\begin{figure}[t]
    \centering
    \includegraphics[width=0.48\textwidth]{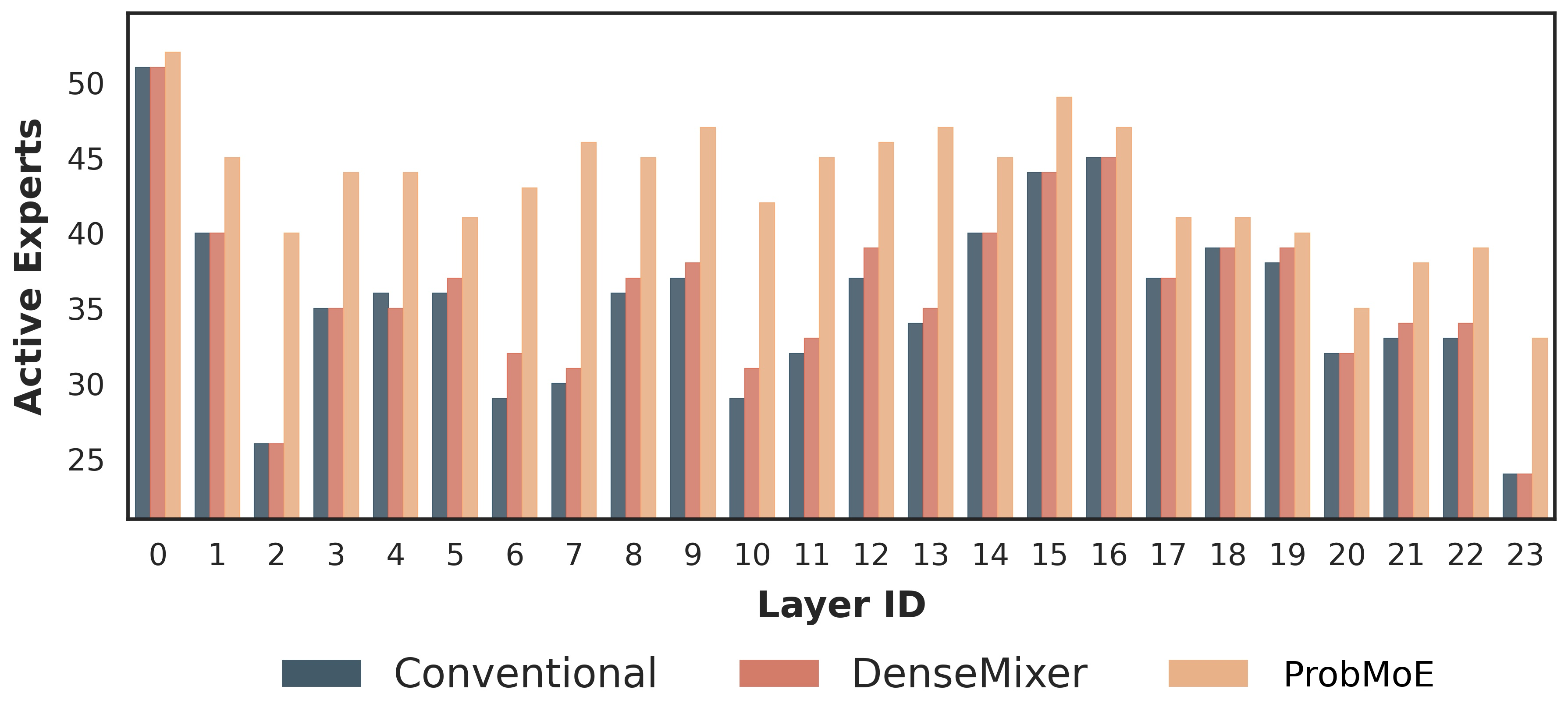}
    \caption{\textbf{ProbMoE Exact-$k$ activates more diverse experts across layers on Qwen than Conventional and DenseMixer routing.} We measure the minimum number of experts required to cumulatively account for $99\%$ of the routing probability mass per token, using a random subset of Translation dataset ($N=1{,}000$).}
    \vspace{-2em}
    \label{fig:routing_breadth}
\end{figure}

\subsection{Analysis of Routing Distribution diversity}
\label{sec:routing_distribution_breadth}
To examine routing behavior beyond aggregate performance, we analyze how routing probability is distributed across experts. Prior work show that highly concentrated routing can lead to expert collapse and inefficient use of model capacity, whereas distributing probability mass more broadly encourages expert specialization and more stable training dynamics \citep{shazeer2017outrageously, lepikhin2021gshard, zuo2022taming, clark2022unified}. Rather than focusing only on which experts are selected, we study how routing probability accumulates across the ranked experts, providing a direct measure of routing diversity and expert utilization.

Specifically, we measure how many experts are needed to account for a given fraction of the total routing probability for each token and layer. This cumulative view captures not only the most dominant experts but also the contribution of lower-ranked experts, offering a more complete picture of how broadly routing probability is distributed during inference. All statistics are computed per token and per layer, and then aggregated across tokens.

\begin{figure}[t]
    \centering
    \includegraphics[width=\linewidth]{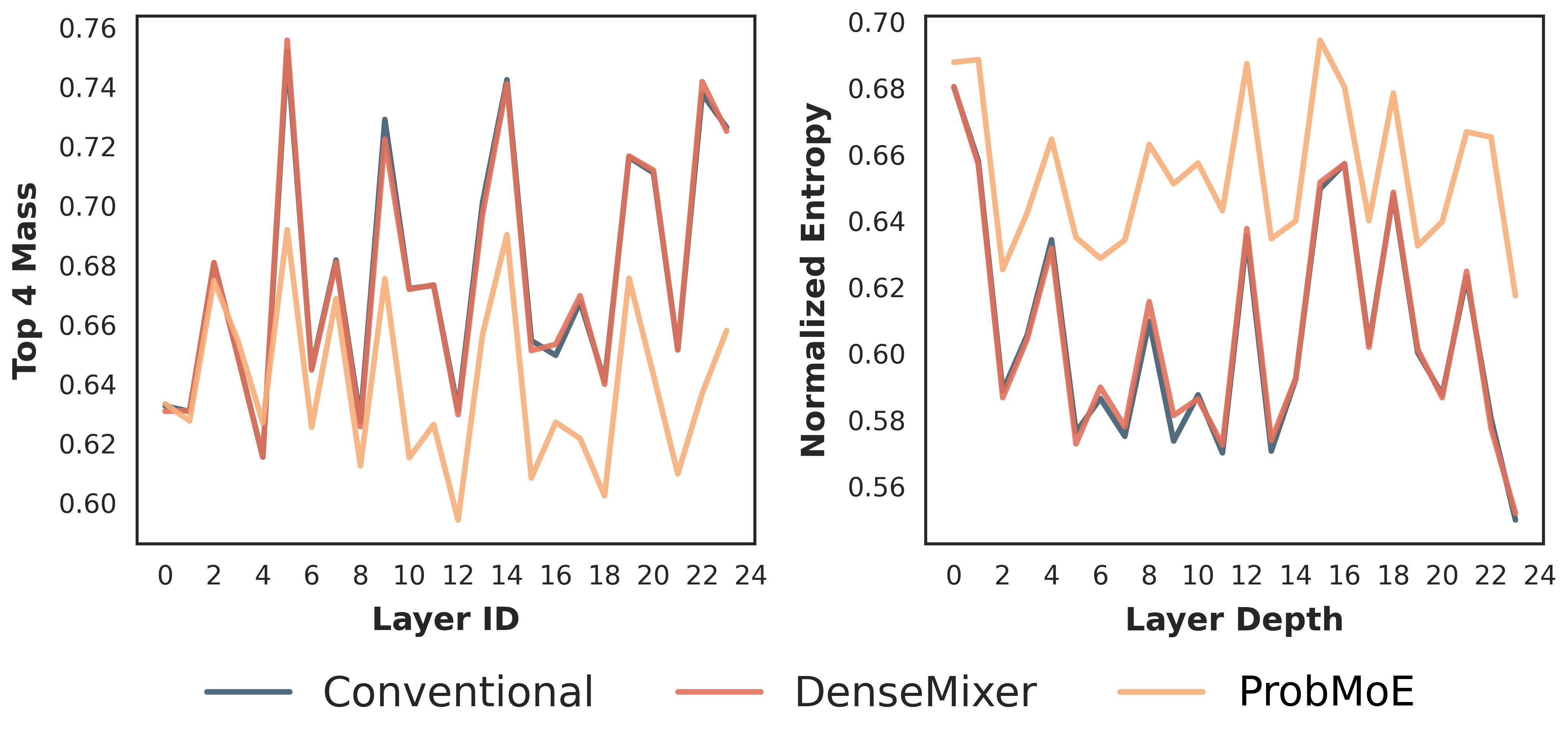}
    \caption{\textbf{ProbMoE Exact-$k$ yields less concentrated and more balanced expert assignment across layers on the Qwen-MoE translation dataset.} The left panel reports the Top-4 assignment mass, defined as the fraction of assignments captured by the four most frequently used experts per layer, while the right panel shows the normalized entropy of the expert assignment distribution.} 
    \label{fig:expert_utilization}
\end{figure}

Figure~\ref{fig:routing_breadth} and Figure~\ref{fig:expert_utilization} illustrate how routing probability mass is distributed across experts. Across both backbones, ProbMoE requires a larger number of experts to reach the same cumulative mass threshold $\alpha$, indicating broader routing support. This effect is especially pronounced on Qwen fine-tuned for Translation, which uses a large expert pool of 60 experts per layer while selecting only four experts per token. Under this highly competitive routing regime, ProbMoE consistently requires more experts to reach 99\% cumulative probability mass across nearly all layers, particularly in deeper layers. Because small shifts in routing probability correspond to large changes in token allocation under such extreme sparsity, this pattern reflects a substantially wider routing distribution induced by ProbMoE.

To further examine how routing probability is allocated within each layer, we analyze expert utilization on Qwen using Top-4 mass and normalized entropy. Top-4 mass is computed as the fraction of total expert assignments in a layer accounted for by the four most frequently used experts, while normalized entropy measures the overall evenness of the assignment distribution. As in Figure~\ref{fig:expert_utilization}, ProbMoE exhibits lower Top-4 mass and higher normalized entropy across layers than both Conventional and DenseMixer routing, indicating that expert traffic is distributed more evenly. In contrast, Conventional and DenseMixer routing produce highly peaked within-layer distributions, with a small number of experts dominating token assignments.

\begin{table}[t]
    \centering
    \footnotesize
    \setlength{\tabcolsep}{8pt}
    \caption{
    Performance of Dynamic-$k$ relative to Exact-$k$. 
    Values are reported as absolute \textbf{EM differences}, with the \textbf{average fraction of experts used} shown in parentheses.}
    \label{tab:dynamic_diff}
    \resizebox{0.8\linewidth}{!}{
    \begin{tabular}{lcc}
        \toprule
        Dataset & OLMoE & Qwen1.5 \\
        \midrule
        GSM & $-1.82$ $(80.00\%)$ & $-4.29$ $(75.00\%)$ \\
        Law & $-0.04$ $(84.50\%)$ & $+2.70$ $(75.00\%)$ \\
        Translation & $+0.36$ $(82.00\%)$ & $+3.22$ $(75.00\%)$ \\
        \bottomrule
    \end{tabular}
    }
    \vspace{-1em}
\end{table}

\begin{figure}[t]
    \centering
    \includegraphics[width=\linewidth]{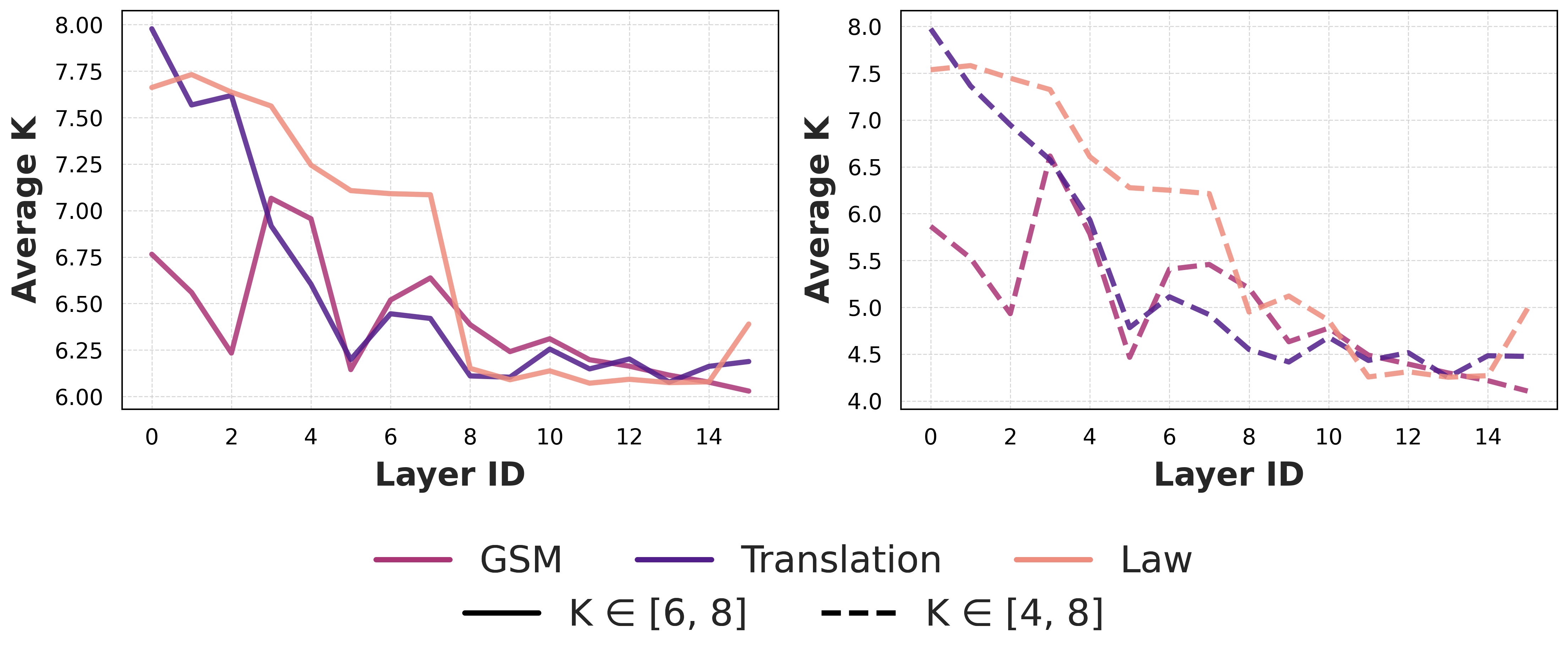}
    \caption{Average routing cardinality at each layer under ProbMoE Dynamic-$k$ with OLMoE backbone on different datasets.}
    \label{fig:dynamic_6-8}
    \vspace{-2em}
\end{figure}

Taken together, these results indicate that ProbMoE exposes a broader set of experts to routing probability over the course of training. Rather than relying on a small group of consistently dominant experts, ProbMoE distributes routing probability more broadly, allowing a wider range of experts to participate across prompts and layers. This broader routing distribution is consistent with ProbMoE's marginal-based optimization. By propagating gradients through the expert-selection marginals in Equation~\eqref{eq:marginal_exact}, ProbMoE provides informative learning signals to the router even when only a sparse subset of experts is executed in the forward pass. This encourages routing probabilities to be shaped by the full subset distribution rather than only the selected experts. More generally, broader expert participation is known to mitigate expert collapse and improve specialization, leading to more stable training dynamics and better use of model capacity~\citep{wu2024multihead,dai-etal-2024-deepseekmoe}. We therefore attribute ProbMoE's performance gains to broader expert utilization and reduced routing concentration.

\section{Dynamic Routing Experiments}

This section examines ProbMoE Dynamic-$k$ routing, which relaxes the fixed-cardinality constraint and allows the model to adapt expert allocation to token-level routing uncertainty. In Table~\ref{tab:dynamic_diff}, ProbMoE Dynamic-$k$ achieves performance comparable to ProbMoE Exact-$k$ while activating fewer experts, reflecting that adaptive cardinalities are learned during training. This indicates that ProbMoE Dynamic-$k$ can reduce the effective routing cost relative to ProbMoE Exact-$k$ routing without substantially sacrificing task performance.

\subsection{Dynamic Routing vs. Token-Level Difficulty}

\begin{figure}[t]
    \centering
    \includegraphics[width=0.9\linewidth]{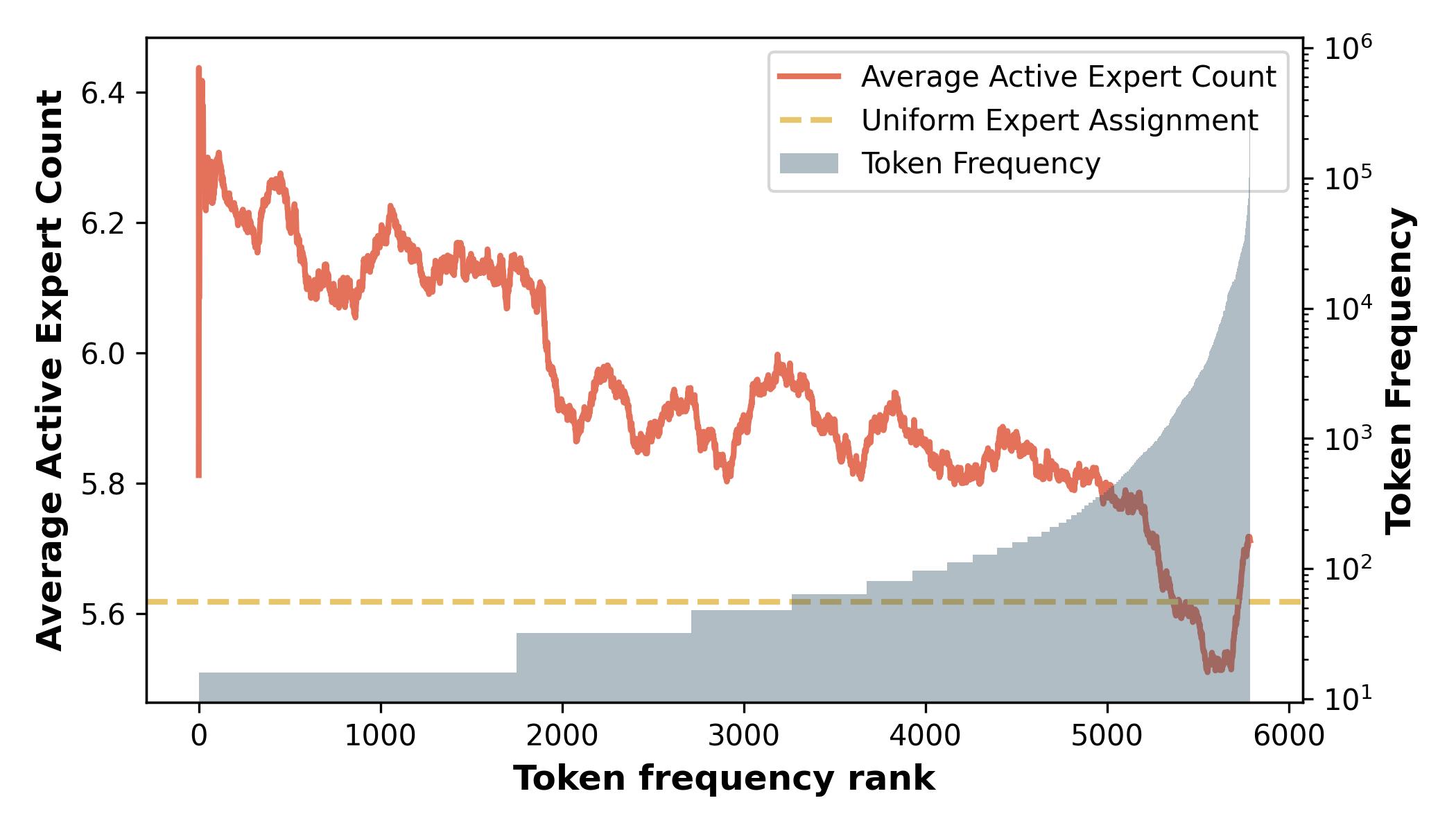}
    \caption{\textbf{Token frequency versus average expert activation under ProbMoE Dynamic-$k$ routing (over 656k tokens).} Tokens are ordered by increasing frequency. The solid curve shows the average number of active experts per token, while the shaded histogram (right axis, log scale) shows token frequency. Dashed line indicates the expected activation under uniform expert assignment, with rarer tokens activating more experts than frequent ones.}
    \label{fig:tok_freq}
    \vspace{-2em}
\end{figure}

At the dataset level, ProbMoE Dynamic-$k$ allocates systematically different amounts of expert capacity across tasks depending on their difficulty. As shown in Figure~\ref{fig:dynamic_6-8}, Law consistently activates more experts on average than Translation, while GSM activates the fewest. This separation persists across layers, indicating that Dynamic-$k$ learns task-specific routing budgets rather than enforcing a uniform expert-usage pattern, consistent with prior observations that different tasks exhibit distinct expert-allocation profiles in MoE models \citep{huang-etal-2024-harder}.

Under ProbMoE Dynamic-$k$, each token activates a variable-size subset of experts, allowing the model to adaptively allocate computation under an explicit cardinality constraint. We analyze the relationship between token frequency and the average number of active experts to understand how this adaptivity is expressed at the token level.

As shown in Figure \ref{fig:tok_freq}, ProbMoE Dynamic-$k$ consistently assigns a larger number of experts to rarer tokens, while routing more conservatively for frequent tokens. This relationship is continuous rather than piecewise: average routing cardinality varies gradually across the token frequency spectrum, rather than collapsing tokens into a small number of discrete routing regimes. Qualitative inspection of representative tokens  as in Table~\ref{tab:dynamic_expert_number} reveals a clear semantic distinction. Tokens associated with higher routing cardinality include punctuation, morphological fragments, and context-sensitive symbols (e.g., \texttt{:}, \texttt{?}, ons), whose meaning depends strongly on surrounding structure. In contrast, tokens routed with fewer experts are typically numerals or semantically concrete words. This pattern suggests that ProbMoE Dynamic-$k$ allocates additional expert capacity to tokens whose semantics are more ambiguous or composition-dependent, while effectively “compressing” the representation of frequent, stable tokens by routing them through a smaller expert set. In contrast to approaches based on continuous expert mixing, ProbMoE realizes this adaptivity through probabilistic inference over discrete expert subsets, preserving sparse execution while allowing the compute budget itself to vary \citep{wang2025remoe}.

\subsection{Training--Inference Mismatch in Cardinality Usage}

Table~\ref{tab:dynamic_k_4_8} examines how different training-time routing and gradient mechanisms affect inference-time expert usage when all models are evaluated with the same dynamic-$k$ MAP inference rule. We focus on GSM under the OLMoE backbone, where conventional MoE training uses the exact-$k$ routing with $k=8$. Although DenseMixer and Conventional models are trained under an exact-$k$ routing assumption, evaluating them under dynamic-$k$ MAP inference reveals that their learned routing distributions favor substantially smaller expert subsets on average. Importantly, this does not imply that these models fail to utilize all $k=8$ experts when exact-$k$ inference is enforced; rather, it reflects the concentration structure of the routing distribution learned during training. When the inference-time cardinality constraint is relaxed, MAP subset selection exposes this concentration by selecting fewer experts with high probability.

In contrast, ProbMoE is trained to explicitly model cardinality as part of the routing objective, leading to improved task performance under dynamic-$k$ inference while maintaining a comparable average routing cardinality. These results highlight a mismatch between training-time cardinality constraints and the structure of the learned routing distribution in conventional exact-$k$ training. While exact-$k$ enforces a fixed number of selected experts during training, it does not encourage the routing distribution itself to support flexible expert allocation. As analyzed in Section~\ref{sec:routing_distribution_breadth}, exact-$k$ routing typically exhibits high utilization within the selected subset but produces highly peaked routing distributions. Under dynamic-$k$ MAP inference, this peakiness manifests as smaller inferred expert subsets, consistent with prior observations in sparse MoE models \citep{shazeer2017outrageously, lepikhin2021gshard, clark2022unified}.

\begin{table}[t]
    \centering
    \caption{Dynamic inference with OLMoE on GSM ($k \in [4,8]$).
    }
    \label{tab:dynamic_k_4_8}
    \resizebox{0.7\linewidth}{!}{
    \begin{tabular}{lcc}
        \toprule
        Training Method & EM (\%) & Avg. $k$ \\
        \midrule
        ProbMoE (Dynamic-$k$)     & 44.50 & 5.018 \\
        DenseMixer         & 38.97 & 5.292 \\
        Conventional       & 38.59 & 5.039 \\
        \bottomrule
        \vspace{-3em}
    \end{tabular}
    }
\end{table}

\section{Conclusions \& Future Directions}
We propose to cast MoE routing as a \emph{probabilistic inference} problem over discrete expert subsets under explicit cardinality constraints. By leveraging SIMPLE~\citep{ahmed2023simple}, ProbMoE enables tractable normalization and exact expert-selection marginals, yielding informative router gradients while preserving sparse computation. Under ProbMoE Exact-$k$, this leads to improved expert utilization while preserving sparse computation. Extending this formulation to dynamic-$k$, we show that adaptive cardinality learned during training achieves performance comparable to ProbMoE Exact-$k$ while activating fewer experts on average, adapting naturally to token and task level complexity. Together, they demonstrate that modeling MoE routing probabilistically provides a principled method for expert selection. Future work includes extending ProbMoE to pre-training and system-level optimizations that fully exploit the sparsity in the dynamic setting.

\clearpage

\section*{Acknowledgments}
We would like to thank Feng Yao for the helpful discussion.
The authors acknowledge the Research Computing at the University of Virginia.
This work was funded in part by the DARPA ANSR and CODORD programs under awards FA8750-23-2-0004 and HR00112590089, and gifts from Cisco Research, Qualcomm, and Amazon. Approved for public release; distribution is unlimited.

\section*{Impact Statement}

This paper presents work whose primary goal is to advance the methodological foundations of Mixture-of-Experts routing in machine learning models. By providing a principled probabilistic framework for discrete and adaptive expert selection, our work aims to improve training stability, interpretability, and computational efficiency in large-scale neural architectures. As such, the potential societal impacts of this work are largely indirect and consistent with those of prior advances in scalable machine learning systems. Improved routing mechanisms may enable more efficient use of computational resources and facilitate the deployment of large models under constrained budgets. At the same time, as with other improvements to model capacity and efficiency, downstream applications may inherit existing risks associated with large language models, including misuse or unintended biases, which are orthogonal to the contributions of this paper. We do not identify any new ethical concerns introduced specifically by the proposed routing framework beyond those already present in the broader deployment of machine learning models.

\bibliography{example_paper}

@article{shazeer2017outrageously,
  title={Outrageously large neural networks: The sparsely-gated mixture-of-experts layer},
  author={Shazeer, Noam and Mirhoseini, Azalia and Maziarz, Krzysztof and Davis, Andy and Le, Quoc and Hinton, Geoffrey and Dean, Jeff},
  journal={arXiv preprint arXiv:1701.06538},
  year={2017}
}

@article{fedus2022switch,
  title={Switch transformers: Scaling to trillion parameter models with simple and efficient sparsity},
  author={Fedus, William and Zoph, Barret and Shazeer, Noam},
  journal={Journal of Machine Learning Research},
  volume={23},
  number={120},
  pages={1--39},
  year={2022}
}

@inproceedings{
lepikhin2021gshard,
title={{\{}GS{\}}hard: Scaling Giant Models with Conditional Computation and Automatic Sharding},
author={Dmitry Lepikhin and HyoukJoong Lee and Yuanzhong Xu and Dehao Chen and Orhan Firat and Yanping Huang and Maxim Krikun and Noam Shazeer and Zhifeng Chen},
booktitle={International Conference on Learning Representations},
year={2021}
}

@inproceedings{
zuo2022taming,
title={Taming Sparsely Activated Transformer with Stochastic Experts},
author={Simiao Zuo and Xiaodong Liu and Jian Jiao and Young Jin Kim and Hany Hassan and Ruofei Zhang and Jianfeng Gao and Tuo Zhao},
booktitle={International Conference on Learning Representations},
year={2022}
}

@inproceedings{
ahmed2023simple,
title={{SIMPLE}: A Gradient Estimator for k-Subset Sampling},
author={Kareem Ahmed and Zhe Zeng and Mathias Niepert and Guy Van den Broeck},
booktitle={The Eleventh International Conference on Learning Representations },
year={2023}
}

@inproceedings{rajbhandari2022deepspeed,
  title={Deepspeed-moe: Advancing mixture-of-experts inference and training to power next-generation ai scale},
  author={Rajbhandari, Samyam and Li, Conglong and Yao, Zhewei and Zhang, Minjia and Aminabadi, Reza Yazdani and Awan, Ammar Ahmad and Rasley, Jeff and He, Yuxiong},
  booktitle={International conference on machine learning},
  pages={18332--18346},
  year={2022},
  organization={PMLR}
}

@misc{eval-harness,
  author       = {Gao, Leo and Tow, Jonathan and Abbasi, Baber and Biderman, Stella and Black, Sid and DiPofi, Anthony and Foster, Charles and Golding, Laurence and Hsu, Jeffrey and Le Noac'h, Alain and Li, Haonan and McDonell, Kyle and Muennighoff, Niklas and Ociepa, Chris and Phang, Jason and Reynolds, Laria and Schoelkopf, Hailey and Skowron, Aviya and Sutawika, Lintang and Tang, Eric and Thite, Anish and Wang, Ben and Wang, Kevin and Zou, Andy},
  title        = {The Language Model Evaluation Harness},
  month        = 07,
  year         = 2024,
  publisher    = {Zenodo},
  version      = {v0.4.3},
}

@article{jin2025sparsity,
  title={Sparsity-Controllable Dynamic Top-p MoE for Large Foundation Model Pre-training},
  author={Jin, Can and Peng, Hongwu and Xiang, Mingcan and Zhang, Qixin and Yuan, Xiangchi and Hasan, Amit and Dibua, Ohiremen and Gong, Yifan and Kang, Yan and Metaxas, Dimitris N},
  journal={arXiv preprint arXiv:2512.13996},
  year={2025}
}

@InProceedings{lewis2021base,
  title = 	 {BASE Layers: Simplifying Training of Large, Sparse Models},
  author =       {Lewis, Mike and Bhosale, Shruti and Dettmers, Tim and Goyal, Naman and Zettlemoyer, Luke},
  booktitle = 	 {Proceedings of the 38th International Conference on Machine Learning},
  pages = 	 {6265--6274},
  year = 	 {2021},
  editor = 	 {Meila, Marina and Zhang, Tong},
  volume = 	 {139},
  series = 	 {Proceedings of Machine Learning Research},
  month = 	 {18--24 Jul},
  publisher =    {PMLR},
}

@inproceedings{
    jin2025moe,
    title={MoE++: Accelerating Mixture-of-Experts Methods with Zero-Computation Experts},
    author={Peng Jin and Bo Zhu and Li Yuan and Shuicheng YAN},
    booktitle={The Thirteenth International Conference on Learning Representations},
    year={2025},
}

@inproceedings{clark2022unified,
  title={Unified scaling laws for routed language models},
  author={Clark, Aidan and de Las Casas, Diego and Guy, Aurelia and Mensch, Arthur and Paganini, Michela and Hoffmann, Jordan and Damoc, Bogdan and Hechtman, Blake and Cai, Trevor and Borgeaud, Sebastian and others},
  booktitle={International conference on machine learning},
  pages={4057--4086},
  year={2022},
  organization={PMLR}
}

@misc{
anonymous2025densemixer,
title={DenseMixer: Improving MoE Post-Training with Precise Router Gradient},
author={Feng Yao and Junxia Cui and Ruohan Zhang and Liyuan Liu and Shibo Hao and Li Zhang and Chengyu Dong and Churan Zhi and Shuohang Wang and yelong shen and Jianfeng Gao and Jingbo Shang},
year={2026}
}

@article{cobbe2021gsm8k,
  title={Training verifiers to solve math word problems},
  author={Cobbe, Karl and Kosaraju, Vineet and Bavarian, Mohammad and Chen, Mark and Jun, Heewoo and Kaiser, Lukasz and Plappert, Matthias and Tworek, Jerry and Hilton, Jacob and Nakano, Reiichiro and others},
  journal={arXiv preprint arXiv:2110.14168},
  year={2021}
}

@inproceedings{wang2024let,
    title = "Let the Expert Stick to His Last: Expert-Specialized Fine-Tuning for Sparse Architectural Large Language Models",
    author = "Wang, Zihan  and
      Chen, Deli  and
      Dai, Damai  and
      Xu, Runxin  and
      Li, Zhuoshu  and
      Wu, Yu",
    editor = "Al-Onaizan, Yaser  and
      Bansal, Mohit  and
      Chen, Yun-Nung",
    booktitle = "Proceedings of the 2024 Conference on Empirical Methods in Natural Language Processing",
    month = nov,
    year = "2024",
    publisher = "Association for Computational Linguistics",
    doi = "10.18653/v1/2024.emnlp-main.46",
}

@inproceedings{
liu2023sparse,
title={Sparse Backpropagation for MoE Training},
author={Liyuan Liu and Jianfeng Gao and Weizhu Chen},
booktitle={Workshop on Advancing Neural Network Training: Computational Efficiency, Scalability, and Resource Optimization (WANT@NeurIPS 2023)},
year={2023},
}

@article{jiang2024mixtral,
  title={Mixtral of experts},
  author={Jiang, Albert Q and Sablayrolles, Alexandre and Roux, Antoine and Mensch, Arthur and Savary, Blanche and Bamford, Chris and Chaplot, Devendra Singh and Casas, Diego de las and Hanna, Emma Bou and Bressand, Florian and others},
  journal={arXiv preprint arXiv:2401.04088},
  year={2024}
}

@inproceedings{du2022glam,
  title={Glam: Efficient scaling of language models with mixture-of-experts},
  author={Du, Nan and Huang, Yanping and Dai, Andrew M and Tong, Simon and Lepikhin, Dmitry and Xu, Yuanzhong and Krikun, Maxim and Zhou, Yanqi and Yu, Adams Wei and Firat, Orhan and others},
  booktitle={International conference on machine learning},
  pages={5547--5569},
  year={2022},
  organization={PMLR}
}

@misc{panda2025dense,
    title={Dense Backpropagation Improves Routing for Sparsely-Gated Mixture-of-Experts},
    author={Ashwinee Panda and Vatsal Baherwani and Zain Sarwar and Benjamin Th{\'e}rien and Stephen Rawls and Sambit Sahu and Tom Goldstein and Supriyo Chakraborty},
    year={2025},
}

@inproceedings{muennighoff2024olmoe,
    title={{OLM}oE: Open Mixture-of-Experts Language Models},
    author={Niklas Muennighoff and Luca Soldaini and Dirk Groeneveld and Kyle Lo and Jacob Morrison and Sewon Min and Weijia Shi and Evan Pete Walsh and Oyvind Tafjord and Nathan Lambert and Yuling Gu and Shane Arora and Akshita Bhagia and Dustin Schwenk and David Wadden and Alexander Wettig and Binyuan Hui and Tim Dettmers and Douwe Kiela and Ali Farhadi and Noah A. Smith and Pang Wei Koh and Amanpreet Singh and Hannaneh Hajishirzi},
    booktitle={The Thirteenth International Conference on Learning Representations},
    year={2025},
}

@misc{qwen1.5,
    title = {Introducing Qwen1.5},
    url = {https://qwenlm.github.io/blog/qwen1.5/},
    author = {Qwen Team},
    month = {February},
    year = {2024}
}

@article{aghdam2024damoedynamicexpertallocation,
  publtype={informal},
  author={Maryam Akhavan Aghdam and Hongpeng Jin and Yanzhao Wu},
  title={DA-MoE: Towards Dynamic Expert Allocation for Mixture-of-Experts Models},
  year={2024},
  cdate={1704067200000},
  journal={CoRR},
  volume={abs/2409.06669},
}

@inproceedings{maddison2017concrete,
    title={The Concrete Distribution: A Continuous Relaxation of Discrete Random Variables},
    author={Chris J. Maddison and Andriy Mnih and Yee Whye Teh},
    booktitle={International Conference on Learning Representations},
    year={2017},
}

@inproceedings{ jang2017categorical,
    title={Categorical Reparameterization with Gumbel-Softmax},
    author={Eric Jang and Shixiang Gu and Ben Poole},
    booktitle={International Conference on Learning Representations},
    year={2017},
}

@article{bengio2013estimating,
  title={Estimating or propagating gradients through stochastic neurons for conditional computation},
  author={Bengio, Yoshua and L{\'e}onard, Nicholas and Courville, Aaron},
  journal={arXiv preprint arXiv:1308.3432},
  year={2013}
}

@inproceedings{huang-etal-2024-harder,
    title = "Harder Task Needs More Experts: Dynamic Routing in {M}o{E} Models",
    author = "Huang, Quzhe  and
      An, Zhenwei  and
      Zhuang, Nan  and
      Tao, Mingxu  and
      Zhang, Chen  and
      Jin, Yang  and
      Xu, Kun  and
      Xu, Kun  and
      Chen, Liwei  and
      Huang, Songfang  and
      Feng, Yansong",
    editor = "Ku, Lun-Wei  and
      Martins, Andre  and
      Srikumar, Vivek",
    booktitle = "Proceedings of the 62nd Annual Meeting of the Association for Computational Linguistics (Volume 1: Long Papers)",
    month = aug,
    year = "2024",
    publisher = "Association for Computational Linguistics",
    doi = "10.18653/v1/2024.acl-long.696",
}

@inproceedings{
wu2024multihead,
title={Multi-Head Mixture-of-Experts},
author={Xun Wu and Shaohan Huang and Wenhui Wang and Shuming Ma and Li Dong and Furu Wei},
booktitle={The Thirty-eighth Annual Conference on Neural Information Processing Systems},
year={2024},
}

@inproceedings{dai-etal-2024-deepseekmoe,
    title = "{D}eep{S}eek{M}o{E}: Towards Ultimate Expert Specialization in Mixture-of-Experts Language Models",
    author = "Dai, Damai  and
      Deng, Chengqi  and
      Zhao, Chenggang  and
      Xu, R.x.  and
      Gao, Huazuo  and
      Chen, Deli  and
      Li, Jiashi  and
      Zeng, Wangding  and
      Yu, Xingkai  and
      Wu, Y.  and
      Xie, Zhenda  and
      Li, Y.k.  and
      Huang, Panpan  and
      Luo, Fuli  and
      Ruan, Chong  and
      Sui, Zhifang  and
      Liang, Wenfeng",
    booktitle = "Proceedings of the 62nd Annual Meeting of the Association for Computational Linguistics (Volume 1: Long Papers)",
    year = "2024",
    publisher = "Association for Computational Linguistics",
}

@inproceedings{
    yue2025adak,
    title={Ada-K Routing: Boosting the Efficiency of MoE-based {LLM}s},
    author={Tongtian Yue and Longteng Guo and Jie Cheng and Xuange Gao and Hua Huang and Jing Liu},
    booktitle={The Thirteenth International Conference on Learning Representations},
    year={2025},
}

@inproceedings{
    wang2025remoe,
    title={ReMoE: Fully Differentiable Mixture-of-Experts with Re{LU} Routing},
    author={Ziteng Wang and Jun Zhu and Jianfei Chen},
    booktitle={The Thirteenth International Conference on Learning Representations},
    year={2025},
}

@misc{
    zibakhsh2025moesstrongerthinkhyperparallel,
    title={MoEs Are Stronger than You Think: Hyper-Parallel Inference Scaling with RoE},
    author={Soheil Zibakhsh Shabgahi and Mohammad Samragh and Kumari Nishu and Lauren Hannah and Arnav Kundu and Minsik Cho},
    year={2026},
}

@inproceedings{
    guo2024dynamic,
    title={Dynamic Mixture of Experts: An Auto-Tuning Approach for Efficient Transformer Models},
    author={Yongxin Guo and Zhenglin Cheng and Xiaoying Tang and Zhaopeng Tu and Tao Lin},
    booktitle={The Thirteenth International Conference on Learning Representations},
    year={2025},
}

@inproceedings{zeng2024adamoetokenadaptiveroutingnull,
    title = "{A}da{M}o{E}: Token-Adaptive Routing with Null Experts for Mixture-of-Experts Language Models",
    author = "Zeng, Zihao  and
      Miao, Yibo  and
      Gao, Hongcheng  and
      Zhang, Hao  and
      Deng, Zhijie",
    editor = "Al-Onaizan, Yaser  and
      Bansal, Mohit  and
      Chen, Yun-Nung",
    booktitle = "Findings of the Association for Computational Linguistics: EMNLP 2024",
    month = nov,
    year = "2024",
    address = "Miami, Florida, USA",
    publisher = "Association for Computational Linguistics",
    doi = "10.18653/v1/2024.findings-emnlp.361",
    pages = "6223--6235",
}

@misc{codealpaca,
  author = {Sahil Chaudhary},
  title = {Code Alpaca: An Instruction-following LLaMA model for code generation},
  year = {2023},
  publisher = {GitHub},
  journal = {GitHub repository},
  howpublished = {\url{https://github.com/sahil280114/codealpaca}},
}

@misc{austin2021programsynthesislargelanguage,
      title={Program Synthesis with Large Language Models}, 
      author={Jacob Austin and Augustus Odena and Maxwell Nye and Maarten Bosma and Henryk Michalewski and David Dohan and Ellen Jiang and Carrie Cai and Michael Terry and Quoc Le and Charles Sutton},
      year={2021},
      eprint={2108.07732},
      archivePrefix={arXiv},
      primaryClass={cs.PL},
}

@inproceedings{
    hendrycks2021measuring,
    title={Measuring Massive Multitask Language Understanding},
    author={Dan Hendrycks and Collin Burns and Steven Basart and Andy Zou and Mantas Mazeika and Dawn Song and Jacob Steinhardt},
    booktitle={International Conference on Learning Representations},
    year={2021}
}

@inproceedings{qian2024probabilistically,
  title={Probabilistically rewired message-passing neural networks},
  author={Qian, Chendi and Manolache, Andrei and Ahmed, Kareem and Zeng, Zhe and Van den Broeck, Guy and Niepert, Mathias and Morris, Christopher},
  booktitle={International Conference on Learning Representations},
  volume={2024},
  pages={32051--32076},
  year={2024}
}

@article{shukla2023unified,
  title={A unified approach to count-based weakly supervised learning},
  author={Shukla, Vinay and Zeng, Zhe and Ahmed, Kareem and Van den Broeck, Guy},
  journal={Advances in Neural Information Processing Systems},
  volume={36},
  pages={38709--38722},
  year={2023}
}
\bibliographystyle{icml2026}


\newpage
\appendix
\onecolumn 

\section{Methodological and Implementation Details}

\subsection{DenseMixer: Dense Training-Time Routing}
\label{app:densemixer_app}
DenseMixer addresses the non-differentiability of top-$k$ routing by modifying gradient propagation during training \citep{anonymous2025densemixer}. Rather than modeling expert subset selection directly, it introduces a straight-through approximation that alters how gradients are propagated through the routing decision. Concretely, DenseMixer augments the softmax routing gradient with an identity term on the selected experts, yielding an approximate gradient of the form:
\begin{align}
    \frac{\partial \mathcal{L}}{\partial r_i}
    \;\approx\;
    \sum_{j \in \mathcal{E}}
    \Big\langle
        \frac{\partial \mathcal{L}}{\partial y},
        f_j(x)
    \Big\rangle
    \tilde{J}_{j,i},
\end{align}
where
\begin{align}
    \tilde{J}_{j, i}
    =
    \frac{\partial \pi_j}{\partial r_i}
    +
    \mathbb{I}[j \in S_{\text{top-}k}]\,\mathbb{I}[i=j]
\end{align}
augments the softmax Jacobian with a straight-through term that treats the top-$k$ selection as an identity mapping on the selected coordinates during backpropagation. This allows gradients to flow to all experts during training, while inference-time routing remains unchanged.

\subsection{Training Protocol}
\label{app:training}
All methods are trained using the same optimizer, learning rate schedules, batch sizes, precision settings, and routing hyperparameters as DenseMixer. For most datasets, we adopt the same number of training epochs as in the original setup. Due to the increased optimization difficulty introduced by stochastic routing, ProbMoE occasionally requires additional training steps to reach stable convergence; in these cases, we extend training to up to 8 epochs while keeping all other hyperparameters fixed.

MMLU-overall and MMLU-stem are evaluated using the fine-tuned model on the CodeAlpaca dataset.

\begin{table}[h]
\centering
\caption{Training configurations for MoE models (full fine-tuning only). We follow the exact setup of \citet{anonymous2025densemixer}.}
\label{tab:training_config}
\vskip 0.15in
\begin{small}
\begin{tabular}{lccc}
\toprule
Model & Method & Batch Size (BS) & Learning Rate (LR) \\
\midrule
Qwen1.5-MoE & Full & $\{32, 64, 128\}$ & $\{5\mathrm{e}{-7}, \dots, 4\mathrm{e}{-5}\}$ \\
OLMoE       & Full & $\{64, 128, 256, 512\}$ & $\{5\mathrm{e}{-7}, \dots, 4\mathrm{e}{-5}\}$ \\
\bottomrule
\end{tabular}
\end{small}
\end{table}

\begin{table}[h]
    \centering
    \caption{Training and evaluation sample counts for each dataset.}
    \label{tab:dataset_sizes}
    \vskip 0.15in
    \begin{small}
    \begin{tabular}{lccc}
        \toprule
            Dataset & Training Samples & Test Samples\\
            \midrule
            GSM            & 7{,}473  & 1{,}319 \\
            Law            & 927      & 100     \\
            Translation    & 11{,}639 & 100     \\
            Summary        & 19{,}578 & 100     \\
            CodeAlpaca     & 22{,}000 &  --        \\
            MBPP           &  --       & 500      \\
            MMLU-Overall   &  --       & 14{,}042        \\
            MMLU-Stem      &  --       & 3{,}153       \\
            \bottomrule
    \end{tabular}
    \end{small}
\end{table}

\subsection{Evaluation Protocol}
\label{app:eval_protocol}
 
We evaluate each task using the following protocols.
 
\textbf{Exact Match.} GSM8K is evaluated using Exact Match (EM) accuracy, where a prediction is counted as correct if and only if it exactly matches the ground-truth answer string after normalization.
 
\textbf{LLM-as-Judge.} Law, Translation, and Summary are evaluated using GPT-4o-mini \footnote{\texttt{gpt-4o-mini-2024-07-18}} as an automated judge. The judge is prompted to score each model output against the reference answer, and the resulting scores are averaged across the test set.
 
\textbf{LM Evaluation Harness.} MMLU and MBPP are evaluated using the EleutherAI Language Model Evaluation Harness~\citep{eval-harness}\footnote{\url{https://github.com/EleutherAI/lm-evaluation-harness}} under its default task configurations.

\section{Theoretical Derivations}
\label{app:Derivations}
\begin{theorem}
\label{prop:dynamic_k_normal_app}
Let $Z_k$ be the normalizing constant as in Equation~\eqref{eq:exact_k_normalizer}. Then the range-constrained normalizing constant,
\[
Z^*=\sum_{k=k_{\min}}^{k_{\max}} Z_k
\] 
can be computed in time $\bigO(Nk_{\max})$. Moreover, it admits a vectorized complexity $\bigO(\log N \log k_{\max})$ assuming perfect parallelization.
\end{theorem}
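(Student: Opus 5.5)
The plan is to reduce the range-constrained normalizer to the same dynamic program that underlies Theorem~\ref{prop:exact_k_normalization}, and to observe that this program already produces every $Z_k$ with $k\le k_{\max}$ as a by-product. Define, for $0\le i\le N$ and $0\le c\le k_{\max}$,
\[
A[i,c] \;=\; \sum_{\substack{S\subseteq[i],\,|S|=c}} \prod_{j\in S} p_j \prod_{j\in[i]\setminus S}(1-p_j),
\]
which is the probability that exactly $c$ of the first $i$ independent Bernoulli variables equal one. Conditioning on whether expert $i$ is selected gives the recurrence
\[
A[i,c] = A[i-1,c]\,(1-p_i) + A[i-1,c-1]\,p_i .
\]
The boundary values are $A[0,0]=1$, $A[0,c]=0$ for $c>0$, and $A[i-1,-1]=0$. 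By Equation~\eqref{eq:exact_k_normalizer} we have $Z_k=A[N,k]$. Counts above $k_{\max}$ never feed into counts at or below $k_{\max}$, so the table can be truncated at column $k_{\max}$ without error.

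Filling the table costs $\bigO(1)$ per entry over $(N+1)(k_{\max}+1)$ entries, which is $\bigO(Nk_{\max})$. Summing $Z^*=\sum_{k=k_{\min}}^{k_{\max}}A[N,k]$ adds only $\bigO(k_{\max})$ work, so the total sequential cost is $\bigO(Nk_{\max})$. Correctness of the recurrence is a routine induction on $i$.

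For the parallel bound, the key observation is that the row $A[N,\cdot]$ is the coefficient vector of the polynomial $\prod_{i=1}^N\bigl((1-p_i)+p_i x\bigr)$, truncated modulo $x^{k_{\max}+1}$. The argument of \citet{ahmed2023simple} behind Theorem~\ref{prop:exact_k_normalization} computes exactly this truncated product. It multiplies the $N$ linear factors pairwise in a balanced binary tree of depth $\lceil\log N\rceil$. At each level, every product of two polynomials truncated to degree $k_{\max}$ is a convolution, and with perfect parallelism a convolution takes $\bigO(\log k_{\max})$ depth, since each output coefficient is a sum of at most $k_{\max}+1$ terms reduced by a parallel tree sum. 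This yields depth $\bigO(\log N\log k_{\max})$.

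Because the procedure returns the whole truncated coefficient vector rather than only $Z_k$, all of $Z_{k_{\min}},\dots,Z_{k_{\max}}$ are available at once. The final summation is one more parallel reduction of depth $\bigO(\log k_{\max})$, which is dominated by the main term.

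The main obstacle is making the invocation of Theorem~\ref{prop:exact_k_normalization} legitimate. As stated, that theorem concerns a single $Z_k$, so I will argue directly that its construction (the dynamic program, or equivalently the truncated polynomial product) yields every lower-order coefficient within the same cost. The sequential version is immediate. For the parallel version, I will explicitly check that truncating each intermediate product at degree $k_{\max}$ is exact for all coefficients up to $k_{\max}$. This holds because coefficients of degree greater than $k_{\max}$ in the factors can only contribute to coefficients of degree greater than $k_{\max}$ in the product.
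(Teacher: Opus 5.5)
Your proposal is correct, and for the sequential bound it is the paper's proof. The paper uses the same chain DP $A(i,c)=p_iA(i-1,c-1)+(1-p_i)A(i-1,c)$ filled up to column $k_{\max}$, and then reads off $Z^*=\sum_{k=k_{\min}}^{k_{\max}}A(N,k)$; it also spells out the one-line identity $Z^*=\sum_k Z_k$ by grouping feasible subsets by cardinality, which you take directly from the statement. The paper's proof never argues the $\bigO(\log N\log k_{\max})$ vectorized claim, so your balanced-tree product of truncated polynomials $\prod_i\bigl((1-p_i)+p_ix\bigr) \bmod x^{k_{\max}+1}$, with your check that truncation is exact, supplies a step the paper leaves to the SIMPLE citation.
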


\begin{proof}
By definition, the Dynamic-$k$ normalizing constant sums the Bernoulli subset mass over all subsets whose cardinality lies in the band:
\[
    Z^*
    =
    \sum_{\substack{S\subseteq[N] \\
    k_{\min}\le |S|\le k_{\max}}}
    \prod_{j\in S} p_j
    \prod_{j\notin S}(1-p_j).
\]
Grouping subsets by their cardinality gives
\[
\begin{aligned}
    Z^*
    &=
    \sum_{k=k_{\min}}^{k_{\max}}
    \sum_{\substack{S\subseteq[N] \\ |S|=k}}
    \prod_{j\in S} p_j
    \prod_{j\notin S}(1-p_j) \\
    &=
    \sum_{k=k_{\min}}^{k_{\max}} Z_k .
\end{aligned}
\]
Following SIMPLE~\citep{ahmed2023simple}, each $Z_k$ can be computed by a dynamic program. Let $A(i,k)$ be the probability of selecting exactly $k$ experts among the first $i$ experts. Then
\[
    A(i,k)
    =
    p_i A(i-1,k-1)
    +
    (1-p_i)A(i-1,k),
\]
with boundary conditions $A(0,0)=1$ and $A(0,k)=0$ for $k>0$. After filling the table for all $i\le N$ and $k\le k_{\max}$, we have $A(N,k)=Z_k$. Therefore,
\[
    Z^*
    =
    \sum_{k=k_{\min}}^{k_{\max}} A(N,k).
\]
The dynamic program has $O(Nk_{\max})$ entries, and each entry is computed in constant time, so $Z_{k_{\min}:k_{\max}}$ can be computed in $O(Nk_{\max})$ time.
\end{proof}

\begin{proposition}
\label{prop:range_marginals_app}
Let $Z^*$ denote the normalizing constant over all subsets satisfying $k_{\min}\le |S|\le k_{\max}$. For each expert $j$, the marginal probability under the range-constrained distribution satisfies that
\begin{shrinkeq}{-1ex}
\[
    m_j^{*}
    \triangleq
    \mathbb{P}_r(z_j = 1\mid k_{\min}\le |S|\le k_{\max})
    =
    \frac{\partial \log Z^*}{\partial \log p_j}.
\]
\end{shrinkeq}
\end{proposition}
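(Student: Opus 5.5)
The plan is to write $Z^*$ explicitly as a multilinear polynomial in the Bernoulli parameters and differentiate term by term. As in the main text for Equation~\eqref{eq:marginal_exact}, we treat the non-selection factors $(1-p_\ell)$ as constants when differentiating with respect to $\log p_j$. Equivalently, we view $Z^*$ as a function of two independent families of weights, $a_\ell=p_\ell$ and $b_\ell=1-p_\ell$:
\[
Z^*=\sum_{\substack{S\subseteq[N]\\ k_{\min}\le |S|\le k_{\max}}} w(S),
\qquad
w(S)=\prod_{\ell\in S} a_\ell\prod_{\ell\notin S} b_\ell .
\]
Here $w(S)$ is the unnormalized Bernoulli mass of $S$, and $\mathbb{P}_r(S\mid k_{\min}\le|S|\le k_{\max})=w(S)/Z^*$. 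We differentiate only in $a_j$.

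The key step is the per-term derivative. Each $w(S)$ is either linear in $a_j$ (when $j\in S$) or independent of $a_j$ (when $j\notin S$). Hence
\[
a_j\,\frac{\partial w(S)}{\partial a_j}=\mathbb{I}[j\in S]\,w(S),
\]
that is, $\partial w(S)/\partial\log a_j=z_j\,w(S)$. Summing over the feasible subsets, which is a finite sum so differentiation and summation commute, gives
\[
\frac{\partial Z^*}{\partial\log p_j}=\sum_{\substack{S\ni j\\ k_{\min}\le |S|\le k_{\max}}} w(S).
\]
Dividing by $Z^*$ via $\partial\log Z^*/\partial\log p_j=(\partial Z^*/\partial\log p_j)/Z^*$ yields
\[
\frac{\partial\log Z^*}{\partial\log p_j}=\sum_{S\ni j}\mathbb{P}_r(S\mid k_{\min}\le|S|\le k_{\max})=\mathbb{P}_r(z_j=1\mid k_{\min}\le|S|\le k_{\max})=m_j^*,
\]
which is the claim.

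A second route reaches the same identity using Theorem~\ref{prop:dynamic_k_normal_app}. Since $Z^*=\sum_k Z_k$, Equation~\eqref{eq:marginal_exact} gives $\partial Z^*/\partial\log p_j=\sum_k Z_k m_j^{(k)}$, where $m_j^{(k)}$ is the exact-$k$ marginal. Dividing by $Z^*$ gives $\sum_k \mathbb{P}_r(|S|=k\mid\text{range})\,m_j^{(k)}$. This is the law of total probability over the cardinality distribution in Equation~\eqref{eq:band_dist}, and it also shows the marginals are computable by differentiating the dynamic program.

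The main obstacle is the convention on the $(1-p_j)$ factors. If $Z^*$ is instead differentiated as a true function of $p_j$ through both factors, the per-term derivative becomes $(z_j-p_j)w(S)/(1-p_j)$. The identity then picks up a correction and reads $m_j^*=p_j+(1-p_j)\,\partial\log Z^*/\partial\log p_j$, not the stated formula. I will therefore state explicitly at the outset that the derivative is taken with $b_\ell$ held fixed, matching the convention in Equation~\eqref{eq:marginal_exact} and SIMPLE. Equivalently, one can parametrize by the odds $p_\ell/(1-p_\ell)$ after factoring out $\prod_\ell(1-p_\ell)$, which cancels in the normalized distribution. With that convention fixed, every remaining step is elementary.
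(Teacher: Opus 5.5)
Your proof is correct and follows the paper's argument: you expand $Z^*$ over feasible subsets, differentiate with respect to $\log p_j$ while holding the $(1-p_\ell)$ factors fixed (the SIMPLE stop-gradient convention), keep only the subsets containing $j$, and normalize by $Z^*$. The paper does not include your second route via $Z^*=\sum_k Z_k$ and the law of total probability, nor your check that dropping the convention gives $m_j^*=p_j+(1-p_j)\,\partial\log Z^*/\partial\log p_j$ instead; both are correct.
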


\begin{proof}
By definition, the range normalizing constant sums the Bernoulli subset mass over all feasible subsets:
\[
    Z^*
    =
    \sum_{\substack{S\subseteq[N] \\
    k_{\min}\le |S|\le k_{\max}}}
    \prod_{e\in S} p_e
    \prod_{e\notin S}(1-p_e).
\]
Differentiating $Z^*$ with respect to $\log p_j$, and treating the non-selection factor $(1-p_j)$ as constant in $\log p_j$ (the stop-gradient convention of SIMPLE\citep{ahmed2023simple}), only the subsets containing expert $j$ contribute. Since $\partial p_j / \partial \log p_j = p_j$, the factor $p_j$ in each such subset is reproduced, giving the unnormalized mass of feasible subsets that include $j$:
\[
    \frac{\partial Z^*}{\partial \log p_j}
    =
    \sum_{\substack{S\subseteq[N] \\
    k_{\min}\le |S|\le k_{\max},\, j\in S}}
    \prod_{e\in S} p_e
    \prod_{e\notin S}(1-p_e).
\]
The conditional probability of including expert $j$ under the range constraint is the probability mass of feasible subsets that contain $j$, normalized by the total feasible mass:
\[
    \mathbb{P}_r(j\in S \mid k_{\min}\le |S|\le k_{\max})
    =
    \frac{
    \sum_{\substack{S\subseteq[N] \\
    k_{\min}\le |S|\le k_{\max},\, j\in S}}
    \prod_{e\in S} p_e
    \prod_{e\notin S}(1-p_e)
    }{
    Z^*
    }.
\]
Using the previous identity, the numerator is $\frac{Z^*}{\partial \log p_j}$. Therefore,
\[
    \mathbb{P}_r(j\in S \mid k_{\min}\le |S|\le k_{\max})
    =
    \frac{1}{Z^*}
    \frac{\partial Z^*}{\partial \log p_j}.
\]
Finally, applying the chain rule to the logarithm gives
\[
    \frac{\partial \log Z^*}{\partial \log p_j}
    =
    \frac{\partial \log Z^*}{\partial Z^*}
    \frac{\partial Z^*}{\partial \log p_j}
    =
    \frac{1}{Z^*}
    \frac{\partial Z^*}{\partial \log p_j}.
\]
Combining the two identities yields
\[
    \frac{\partial \log Z^*}{\partial \log p_j}
    =
    \mathbb{P}_r(j\in S \mid k_{\min}\le |S|\le k_{\max}).
\]
\end{proof}

\section{Choosing the Dynamic-$k$ Range}
\label{app:dynamic_range}

The range $[k_{\min}, k_{\max}]$ controls the allowable per-token compute budget for ProbMoE Dynamic-$k$ routing. The upper bound $k_{\max}$ determines the maximum number of experts that can be activated for any token, and can therefore be chosen according to the available compute budget or latency constraint. Setting a smaller $k_{\max}$ enforces a stricter upper bound on routing cost, while a larger $k_{\max}$ allows the model to allocate additional capacity to tokens with higher routing uncertainty or task-specific complexity.

The lower bound $k_{\min}$ determines the minimum number of experts that remain active for each token. In practice, this value can be selected based on stability and capacity considerations. A very small $k_{\min}$ may reduce computation, but can also make routing overly sparse and limit expert participation. A larger $k_{\min}$ ensures a minimum level of expert usage, which can be helpful when the task requires broader representation capacity or when stable routing behavior is desired.

Together, $k_{\min}$ and $k_{\max}$ define the range over which ProbMoE Dynamic-$k$ can adapt its routing cardinality. This range provides a direct way to balance adaptive expert allocation with explicit control over computational cost. In our experiments, we choose the range so that $k_{\max}$ matches the original fixed-$k$ budget of the backbone, while $k_{\min}$ allows the model to reduce the number of active experts when fewer experts are sufficient.

\section{Extended ProbMoE Dynamic-$k$ table}
\begin{table}[h]
    \centering
    \caption{Dynamic routing expert analysis. Examples for top/bottom 20\% experts by average $k$.}
    \label{tab:dynamic_expert_number}
    \begin{tabular}{lc}
        \toprule
          Avg. Active Experts & Examples \\
        \midrule
        Top 20\% & \texttt{\textbackslash n}, \texttt{:}, \texttt{?}, ons, eli\\
        Bottom 20\% & 16, 5, downward, Meat, reduce \\
        \bottomrule
    \end{tabular}
\end{table}

\newpage

\section{Wall-Clock/GPU-Memory Analysis}
\label{app:wall_clock}

\begin{figure}[h]
    \centering
    \includegraphics[width=\linewidth]{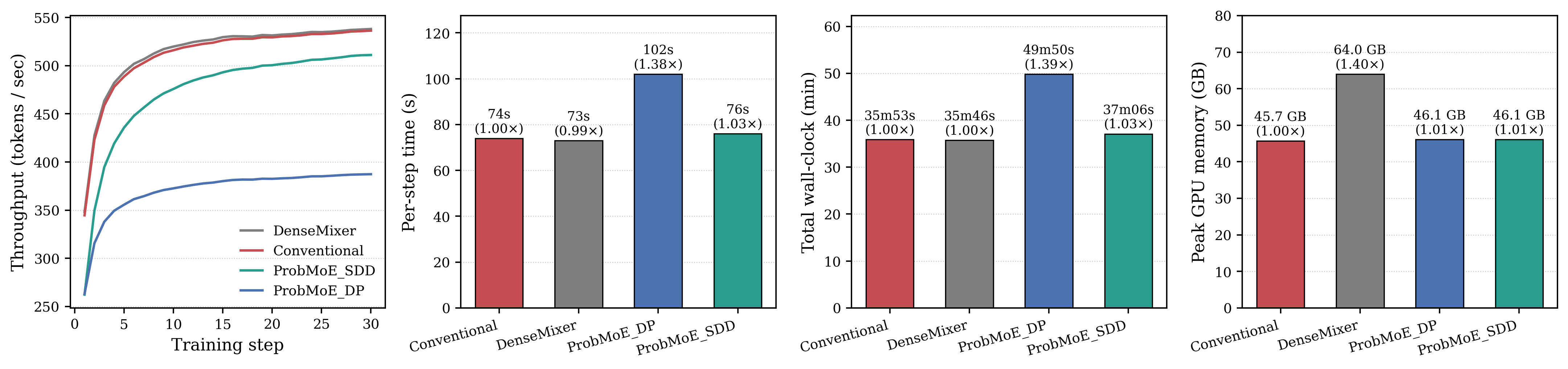}
    \caption{{Wall-clock and memory analysis on the GSM fine-tuning task.} Comparison across Conventional fine-tuning, DenseMixer, ProbMoE\_DP, and ProbMoE\_SDD on OLMoE-1B-7B. (a) Per-step throughput in tokens per second over 30 training steps. (b) Steady-state per-step compute time. (c) End-to-end wall-clock time. (d) Peak GPU memory per GPU. Values in parentheses denote ratios relative to Conventional.}
    \label{fig:Wall-clock-time}
\end{figure}

Figure~\ref{fig:Wall-clock-time} reports per-step throughput, steady-state per-step compute, end-to-end training time, and peak GPU memory for the four methods on the GSM fine-tuning task. ProbMoE\_SDD nearly matches Conventional fine-tuning across all three time-based metrics, with only a marginal slowdown in per-step compute and end-to-end wall-clock. ProbMoE\_SDD is the optimized variant of ProbMoE\_DP detailed in Algorithm~\ref{alg:ProbMoE-sdd}. 

In terms of peak GPU memory, ProbMoE\_DP and ProbMoE\_SDD closely match Conventional fine-tuning, whereas DenseMixer requires substantially more memory than any other method. DenseMixer requires a dense forward pass through all experts for every token in order to compute its straight-through gradient, dramatically increasing forward-pass FLOPs. In contrast, ProbMoE's routing optimization operates over $N$ scalar routing logits with $\mathcal{O}(N \cdot k_{\max})$ complexity, which is substantially cheaper than running full FFN computations for all inactive experts. 

\section{Gradient variance comparison}
\label{app:grad_var}
\begin{table}[h]
\centering
\caption{Comparison of aggregated bias, variance, and gradient error between ProbMoE and DenseMixer on a synthetic MoE task.}
\label{tab:ProbMoE_densemixer}
\begin{tabular}{lccc}
\toprule
\textbf{Model} & \textbf{Aggregated Bias} & \textbf{Aggregated Variance} & \textbf{Aggregated Error} \\
\midrule
ProbMoE      & $0.0702 \pm 0.0426$ & $0.0076 \pm 0.0110$ & $0.0967 \pm 0.0659$ \\
DenseMixer & $0.1052 \pm 0.0839$ & $0.0201 \pm 0.0427$ & $0.1349 \pm 0.0995$ \\
\bottomrule
\end{tabular}
\end{table}

Calculating exact ground-truth gradients on real data is intractable. In order to accurately measure an estimator's bias and variance for a comprehensive comparison, we consider a controlled setting where the true gradient is known. We carry out a synthetic experiment with a top-$K$ distribution over $n=10$ experts and $k=5$, yielding $C(10,5) = 252$ possible subsets, making the exact gradient tractable in closed form. The setup simulates $T=10$ tokens with per-token router logits to mimic real MoE behavior, and we compare DenseMixer against ProbMoE. We measure bias, variance, and average error via cosine distance, a widely adopted metric for assessing gradient variance~\citep{ahmed2023simple}, against the exact gradient over 10,000 samples across 10 random seeds. As shown in Table~\ref{tab:ProbMoE_densemixer}, ProbMoE consistently outperforms DenseMixer across all three metrics, achieving lower variance, lower bias, and lower overall gradient error, confirming that ProbMoE's marginal-based estimator provides a more accurate and stable approximation of the true gradient.

\newpage
\section{ProbMoE Dynamic-$k$ Chain-DP Implementation }
We summarize the overall \textsc{ProbMoE} Dynamic-$k$ framework in Algorithm~\ref{alg:ProbMoE-dynamic}, the core probabilistic sampling and marginal computation logic in Algorithm~\ref{alg:bandk-core}, and the deterministic Maximum A Posteriori (MAP) routing procedure used for inference in Algorithm~\ref{alg:deterministic-routing}.

\begin{algorithm}[h]
    \caption{\textsc{ProbMoE}$(H,k_{\min},k_{\max})$: ProbMoE Dynamic-$k$ Routing for MoE}
    \label{alg:ProbMoE-dynamic}
    \begin{algorithmic}[0]
        \INPUT Token states $H\in\mathbb{R}^{T\times d}$, router logits $r$, $k_{\min},k_{\max}$
        \OUTPUT Selected experts $I$, routing weights $w$

        \IF{training}
            \STATE $(z,k,\mu) \leftarrow \textsc{BandKCore}(r,k_{\min},k_{\max})$ \COMMENT{Alg.~\ref{alg:bandk-core}}
            \STATE $\pi \leftarrow \mathrm{softmax}(r)$
            \STATE $m \leftarrow \mathrm{stopgrad}(z - \mu) + \mu$ \COMMENT{Straight-through estimator over full vector, cf.\ Eq.~(7)}
            \STATE $w^{\mathrm{full}} \leftarrow m \odot \pi$
            \STATE $I \leftarrow \textsc{TopKIndices}(z,k_{\max})$
            \STATE $w \leftarrow w^{\mathrm{full}}\vert_I$

        \ELSE
            \STATE Find the MAP set of experts \COMMENT{Alg.~\ref{alg:deterministic-routing}}
        \ENDIF

        \STATE \textbf{return} $I,w$

    \end{algorithmic}
\end{algorithm}

\begin{algorithm}
\caption{\textsc{BandKCore}$(r,k_{\min},k_{\max})$}
\label{alg:bandk-core}
\begin{algorithmic}[0]
\INPUT Router logits $r\in\mathbb{R}^{T\times N}$, band $[k_{\min},k_{\max}]$
\OUTPUT Sampled masks $S\in\{0,1\}^{T\times N}$, sampled cardinalities $k\in\{k_{\min},\dots,k_{\max}\}^T$, marginals $\mu\in[0,1]^{T\times N}$

\STATE $\log p \leftarrow \log\sigma(r)$
\STATE $\log q \leftarrow \log(1-\exp(\log p))$
\STATE $a \leftarrow \textsc{PrUpToK}(\log p,\log q,E,k_{\max})$
\STATE $k \leftarrow \textsc{SampleBandK}(a,k_{\min},k_{\max})$
\STATE $S \leftarrow \textsc{SampleExactKDynamic}(a,\log p,k)$
\STATE $\mu \leftarrow \textsc{MarginalsBand}(r,k_{\min},k_{\max})$
\STATE \textbf{return} $S,k,\mu$
\\[0.2em]
\STATE \hrulefill
\\[-0.2em]

\STATE \textbf{function} \textsc{PrUpToK}$(\log p,\log q,n,k_{\max})$
\STATE \hspace{1em}initialize $a[i,j] \leftarrow -\infty$ for all $i,j$
\STATE \hspace{1em}$a[0,0] \leftarrow 0$ \COMMENT{$\log A(0,0)=\log 1=0$; $A(0,k)=0$ for $k>0$, cf.\ Prop.~B.1}
\STATE \hspace{1em}\textbf{for} $i=1$ \textbf{to} $n$ \textbf{do}
\STATE \hspace{2em}\textbf{for} $j=0$ \textbf{to} $k_{\max}$ \textbf{do}
\STATE \hspace{3em}$a[i,j] = \logaddexp\big(a[i-1,j-1]+\log p_i,\; a[i-1,j]+\log q_i\big)$
\STATE \hspace{2em}\textbf{end}
\STATE \hspace{1em}\textbf{end}
\STATE \hspace{1em}\textbf{return} $a$
\\[0.2em]

\STATE \textbf{function} \textsc{SampleBandK}$(a,k_{\min},k_{\max})$
\STATE \hspace{1em}$\ell \leftarrow \big[a[n,k_{\min}],\, a[n,k_{\min}+1],\, \dots,\, a[n,k_{\max}]\big]$
\STATE \hspace{1em}$\rho \leftarrow \mathrm{softmax}(\ell)$
\STATE \hspace{1em}sample $j \sim \mathrm{Categorical}(\rho)$
\STATE \hspace{1em}$k \leftarrow j + k_{\min}$
\STATE \hspace{1em}\textbf{return} $k$
\\[0.2em]

\STATE \textbf{function} \textsc{SampleExactKDynamic}$(a,\log p,k)$
\STATE \hspace{1em}$j \leftarrow k$,\quad $z\leftarrow[\,]$
\STATE \hspace{1em}\textbf{for} $i=n$ \textbf{to} $1$ \textbf{do}
\STATE \hspace{2em}$\pi_i \leftarrow \big(a[i-1,j-1]+\log p_i\big)-a[i,j]$ \COMMENT{Log conditional: include expert $i$ given $j$ still needed}
\STATE \hspace{2em}$z_i \sim \mathrm{Bernoulli}(\sigma(\pi_i))$
\STATE \hspace{2em}\textbf{if} $z_i=1$ \textbf{then} $j \leftarrow j-1$
\STATE \hspace{2em}append $z_i$ to $z$
\STATE \hspace{1em}\textbf{end}
\STATE \hspace{1em}\textbf{return} $z$
\\[0.2em]

\STATE \textbf{function} \textsc{MarginalsBand}$(r,k_{\min},k_{\max})$
\STATE \hspace{1em}$\log p \leftarrow \log\sigma(r)$
\STATE \hspace{1em}$\log q \leftarrow \log(1-\exp(\log p))$
\STATE \hspace{1em}$a \leftarrow \textsc{PrUpToK}(\log p,\log q,E,k_{\max})$
\STATE \hspace{1em}$L \leftarrow \sum_t \logsumexp\big(a[n,k_{\min}],\dots,a[n,k_{\max}]\big)$
\STATE \hspace{1em}$\mu \leftarrow \nabla_{\log p}\, L$
\STATE \hspace{1em}\textbf{return} $\mu$
\end{algorithmic}
\end{algorithm}

\begin{algorithm}
   \caption{Deterministic Top-$k$ Routing}
   \label{alg:deterministic-routing}
\begin{algorithmic}[0]
   \STATE {\bfseries Input:} Logits $\theta \in \mathbb{R}^{B \times N}$, range limits $k_{\min}, k_{\max}$
   \STATE {\bfseries Output:} Indices $I \in \mathbb{Z}^{B \times k_{\max}}$, Weights $W \in \mathbb{R}^{B \times k_{\max}}$

   \STATE $\pi \leftarrow \mathrm{Softmax}(\theta,\ \mathrm{axis}{=}1)$
   \STATE $(I^{\mathrm{all}},\ s^{\mathrm{sorted}}) \leftarrow \mathrm{SortDescending}(\pi,\ \mathrm{axis}{=}1)$ \COMMENT{$I^{\mathrm{all}}$: sorted indices; $s^{\mathrm{sorted}}$: corresponding sorted $\pi$ values}
   \STATE $C \leftarrow \mathrm{PrefixSum}(s^{\mathrm{sorted}},\ \mathrm{axis}{=}1)$

   \STATE $k^* \leftarrow \arg\max_{k \in \{k_{\min}, \dots, k_{\max}\}} C[:, k]$

   \STATE $I \leftarrow I^{\mathrm{all}}[:, 1:k_{\max}]$

   \FOR{$i=1$ {\bfseries to} $B$}
      \FOR{$j=1$ {\bfseries to} $k_{\max}$}
         \IF{$j \leq k^*_i$}
            \STATE $W_{i,j} \leftarrow \pi_{i,\, I_{i,j}}$
         \ELSE
            \STATE $W_{i,j} \leftarrow 0$
         \ENDIF
      \ENDFOR
   \ENDFOR

   \STATE {\bfseries return} $I, W$
\end{algorithmic}
\end{algorithm}

\newpage

\section{ProbMoE Exact-$k$ Efficient SDD-Based Implementation}
\label{app:sdd_eff}

The chain-DP formulation in Algorithm~\ref{alg:bandk-core} rebuilds its DP table from scratch at every forward pass. We additionally provide an optimized implementation that compiles the exactly-$k$ constraint once into a Sentential Decision Diagram (SDD) and reuses the compiled circuit thereafter. Both implementations are mathematically equivalent: they compute identical per-variable marginals and draw samples from the same constrained distribution.

We summarize the overall optimized ProbMoE fixed-$k$ framework in Algorithm~\ref{alg:ProbMoE-sdd}, the core circuit-based sampling and marginal computation logic in Algorithm~\ref{alg:sdd-core}, and the deterministic top-$k$ routing procedure used for inference in Algorithm~\ref{alg:deterministic-routing-sdd}. The exactly-$k$-of-$n$ constraint is compiled once into a Sentential Decision Diagram (SDD) $\beta$ via Algorithm~\ref{alg:compile-exactly-k}; the resulting circuit is reused for every forward pass at training time.

\begin{algorithm}[h]
    \caption{\textsc{ProbMoE-SDD}$(H,k,\beta)$: Fixed-$k$ Circuit Routing for MoE}
    \label{alg:ProbMoE-sdd}
    \begin{algorithmic}[0]
        \INPUT Token states $H\in\mathbb{R}^{T\times d}$, router logits $r\in\mathbb{R}^{T\times N}$, top-$k$ value $k$, compiled SDD root $\beta$
        \OUTPUT Selected experts $I$, routing weights $w$

        \IF{training}
            \STATE $(z,\mu) \leftarrow \textsc{SDDCore}(r,k,\beta)$ \COMMENT{Alg.~\ref{alg:sdd-core}}
            \STATE $\pi \leftarrow \mathrm{softmax}(r)$
            \STATE $m \leftarrow \mathrm{stopgrad}(z - \mu) + \mu$ \COMMENT{Straight-through estimator, cf.\ Eq.~(7)}
            \STATE $w_{\mathrm{full}} \leftarrow m \odot \pi$
            \STATE $I \leftarrow \textsc{TopKIndices}(z,k)$ \COMMENT{$z$ has exactly $k$ ones per row}
            \STATE $w \leftarrow w_{\mathrm{full}}\vert_I$

        \ELSE
            \STATE $(I,w) \leftarrow$ deterministic top-$k$ routing \COMMENT{Alg.~\ref{alg:deterministic-routing-sdd}}
        \ENDIF

        \STATE \textbf{return} $I,w$

    \end{algorithmic}
\end{algorithm}

\begin{algorithm}
\caption{\textsc{SDDCore}$(r,k,\beta)$}
\label{alg:sdd-core}
\begin{algorithmic}[0]
\INPUT Router logits $r\in\mathbb{R}^{T\times N}$, cardinality $k$, compiled SDD root $\beta$ encoding $\sum_i x_i = k$
\OUTPUT Sampled masks $z\in\{0,1\}^{T\times N}$ with $\sum_i z_i = k$, marginals $\mu\in[0,1]^{T\times N}$ where $\mu_{t,i} = \Pr[x_i=1 \mid \sum_j x_j = k]$

\STATE $\log p \leftarrow \log\sigma(r)$
\STATE $\log q \leftarrow \log(1-\exp(\log p))$ \COMMENT{Numerically-stable \texttt{log1mexp}; treated as stop-grad}
\STATE $\log Z \leftarrow \textsc{CircuitUpward}(\log p,\log q,\beta)$
\STATE $\mu \leftarrow \nabla_{\log p}\,\log Z$ \COMMENT{Marginal trick; via autograd or \textsc{CircuitDownward}}
\STATE $z \leftarrow \textsc{CircuitSample}(\log p,\log q,\beta)$
\STATE \textbf{return} $z,\mu$
\\[0.2em]
\STATE \hrulefill
\\[-0.2em]

\STATE \textbf{function} \textsc{CircuitUpward}$(\log p,\log q,\beta)$
\STATE \hspace{1em}\textbf{for} each literal node $\ell_i^{+}$ \textbf{do} $\;v[\ell_i^{+}] \leftarrow \log p_i$
\STATE \hspace{1em}\textbf{for} each literal node $\ell_i^{-}$ \textbf{do} $\;v[\ell_i^{-}] \leftarrow \log q_i$
\STATE \hspace{1em}\textbf{for} each level $L$ of $\beta$, leaves $\rightarrow$ root \textbf{do}
\STATE \hspace{2em}\textbf{for} each decomposition node $n\in L$ with elements $\{(p_j,s_j)\}_j$ \textbf{do}
\STATE \hspace{3em}$\theta[n,j] \leftarrow v[p_j] + v[s_j]$ \COMMENT{Element log-weights; $p_j$, $s_j$ are prime/sub children of $n$}
\STATE \hspace{3em}$v[n] \leftarrow \logsumexp_j\,\theta[n,j]$
\STATE \hspace{3em}$\theta[n,\cdot] \leftarrow \theta[n,\cdot] - v[n]$ \COMMENT{Cache normalized log-conditionals}
\STATE \hspace{1em}\textbf{end}
\STATE \hspace{1em}\textbf{return} $v[\beta]$ \COMMENT{$\log Z$ = log-partition under constraint}
\\[0.2em]

\STATE \textbf{function} \textsc{CircuitDownward}$(\beta,\theta)$ \COMMENT{Direct first-order marginals; alternative to autograd}
\STATE \hspace{1em}$g[\beta] \leftarrow 0$;\quad $g[n] \leftarrow -\infty$ for all other nodes
\STATE \hspace{1em}\textbf{for} each level $L$ of $\beta$, root $\rightarrow$ leaves \textbf{do}
\STATE \hspace{2em}\textbf{for} each decomposition node $n\in L$ with elements $\{(p_j,s_j)\}_j$ \textbf{do}
\STATE \hspace{3em}\textbf{for} each element index $j$ \textbf{do} \COMMENT{Distribute message from $n$ to its prime/sub children}
\STATE \hspace{4em}$g[p_j] \leftarrow \logsumexp\big(g[p_j],\; g[n] + \theta[n,j]\big)$
\STATE \hspace{4em}$g[s_j] \leftarrow \logsumexp\big(g[s_j],\; g[n] + \theta[n,j]\big)$
\STATE \hspace{3em}\textbf{end}
\STATE \hspace{1em}\textbf{end}
\STATE \hspace{1em}$\mu_i \leftarrow \exp\!\big(g[\ell_i^{+}]\big)$ for each variable $i$
\STATE \hspace{1em}\textbf{return} $\mu$
\\[0.2em]

\STATE \textbf{function} \textsc{CircuitSample}$(\log p,\log q,\beta)$ \COMMENT{Exact ancestral sample from $\Pr[\,\cdot\mid \sum_i x_i = k]$}
\STATE \hspace{1em}Run \textsc{CircuitUpward} to populate $\theta$
\STATE \hspace{1em}$\mathrm{active}[\beta] \leftarrow 1$;\quad $\mathrm{active}[n] \leftarrow 0$ for all other nodes
\STATE \hspace{1em}\textbf{for} each level $L$ of $\beta$, root $\rightarrow$ leaves \textbf{do}
\STATE \hspace{2em}\textbf{for} each decomposition node $n\in L$ with $\mathrm{active}[n]>0$ \textbf{do}
\STATE \hspace{3em}sample element index $j^\ast \sim \mathrm{softmax}_j\,\theta[n,j]$
\STATE \hspace{3em}$\mathrm{active}[p_{j^\ast}] \mathrel{+}= 1$;\quad $\mathrm{active}[s_{j^\ast}] \mathrel{+}= 1$ \COMMENT{Accumulate; subgraphs are shared}
\STATE \hspace{1em}\textbf{end}
\STATE \hspace{1em}$z_i \leftarrow \mathbb{1}[\mathrm{active}[\ell_i^{+}]>0]$ for each variable $i$
\STATE \hspace{1em}\textbf{return} $z$
\end{algorithmic}
\end{algorithm}

\begin{algorithm}
\caption{\textsc{CompileExactlyK}$(n,k)$: One-time SDD construction}
\label{alg:compile-exactly-k}
\begin{algorithmic}[0]
\INPUT Number of experts $n$ (power of two), cardinality $k$
\OUTPUT SDD root $\beta$ encoding $\sum_{i=1}^{n} x_i = k$

\STATE create literal nodes $\ell_i^{+},\ell_i^{-}$ for $i=1,\dots,n$
\STATE $\mathrm{dp}_{\mathrm{prev}}[i,0] \leftarrow \ell_i^{-}$,\quad $\mathrm{dp}_{\mathrm{prev}}[i,1] \leftarrow \ell_i^{+}$ \quad for $i=1,\dots,n$
\STATE $m \leftarrow n$
\WHILE{$m>1$}
    \STATE $m \leftarrow m/2$
    \FOR{$i=0$ \textbf{to} $m-1$}
        \FOR{$j=0$ \textbf{to} $\min(k,\,n/m)$}
            \STATE $\mathcal{E} \leftarrow \big\{(\mathrm{dp}_{\mathrm{prev}}[2i,\,j'],\;\mathrm{dp}_{\mathrm{prev}}[2i+1,\,j-j']) : 0\le j'\le j\big\}$
            \STATE $\mathrm{dp}_{\mathrm{curr}}[i,j] \leftarrow \textsc{LookupOrCreate}(\mathcal{E})$ \COMMENT{Hash-cons to deduplicate shared subgraphs}
        \ENDFOR
    \ENDFOR
    \STATE $\mathrm{dp}_{\mathrm{prev}} \leftarrow \mathrm{dp}_{\mathrm{curr}}$
\ENDWHILE
\STATE \textbf{return} $\mathrm{dp}_{\mathrm{prev}}[0,k]$
\end{algorithmic}
\end{algorithm}

\begin{algorithm}
   \caption{Deterministic Top-$k$ Routing (SDD inference)}
   \label{alg:deterministic-routing-sdd}
\begin{algorithmic}[0]
   \STATE {\bfseries Input:} Logits $\theta \in \mathbb{R}^{B \times N}$, fixed cardinality $k$
   \STATE {\bfseries Output:} Indices $I \in \mathbb{Z}^{B \times k}$, Weights $W \in \mathbb{R}^{B \times k}$

   \STATE $\pi \leftarrow \mathrm{Softmax}(\theta,\;\mathrm{axis}{=}1)$
   \STATE $I \leftarrow \mathrm{TopKIndices}(\pi,\,k,\;\mathrm{axis}{=}1)$
   \STATE $W \leftarrow \mathrm{Gather}(\pi,\,I)$
   \STATE {\bfseries return} $I, W$
\end{algorithmic}
\end{algorithm}




\end{document}